\pdfoutput=1
\documentclass[11pt,a4paper]{article}

\usepackage[T1]{fontenc}
\usepackage[utf8]{inputenc}
\usepackage[a4paper,margin=2.7cm]{geometry}

\usepackage{amsmath}
\usepackage{amssymb}
\usepackage{amsthm}
\newtheorem{proposition}{Proposition}

\usepackage{graphicx}
\usepackage{float}
\usepackage{subcaption}
\usepackage{booktabs}
\usepackage{longtable}
\usepackage{lscape}
\usepackage{tabularx}
\usepackage{multirow}
\usepackage{xcolor}

\usepackage{algorithm}
\usepackage{algpseudocode}

\usepackage{authblk}

\usepackage[hidelinks]{hyperref}
\usepackage{url}

\usepackage{xparse}

\DeclareUnicodeCharacter{2212}{\ensuremath{-}}

\date{}

\ExplSyntaxOn

\cs_new_eq:NN \arxiv_original_author \author
\cs_new_eq:NN \arxiv_original_affil \affil
\cs_new_eq:NN \arxiv_original_abstract_begin \abstract
\cs_new_eq:NN \arxiv_original_abstract_end \endabstract
\cs_new_eq:NN \arxiv_original_bibliographystyle 

\prop_new:N \g_arxiv_affiliation_number_prop
\int_new:N  \g_arxiv_affiliation_number_int
\tl_new:N   \g_arxiv_abstract_tl
\tl_new:N   \g_arxiv_keywords_tl

\tl_new:N \l_arxiv_org_tl
\tl_new:N \l_arxiv_address_tl
\tl_new:N \l_arxiv_city_tl
\tl_new:N \l_arxiv_postcode_tl
\tl_new:N \l_arxiv_state_tl
\tl_new:N \l_arxiv_country_tl
\tl_new:N \l_arxiv_affiliation_text_tl

\keys_define:nn { arxiv / affiliation }
  {
    organization .tl_set:N = \l_arxiv_org_tl,
    addressline  .tl_set:N = \l_arxiv_address_tl,
    city         .tl_set:N = \l_arxiv_city_tl,
    postcode     .tl_set:N = \l_arxiv_postcode_tl,
    state        .tl_set:N = \l_arxiv_state_tl,
    country      .tl_set:N = \l_arxiv_country_tl
  }

\cs_new_protected:Npn \arxiv_get_affiliation_number:nN #1#2
  {
    \prop_get:NnNTF
      \g_arxiv_affiliation_number_prop
      {#1}
      #2
      { }
      {
        \int_gincr:N \g_arxiv_affiliation_number_int
        \tl_set:Nx #2
          { \int_use:N \g_arxiv_affiliation_number_int }

        \prop_gput:Nnx
          \g_arxiv_affiliation_number_prop
          {#1}
          { \int_use:N \g_arxiv_affiliation_number_int }
      }
  }

\cs_new_protected:Npn \arxiv_append_affiliation_field:V #1
  {
    \tl_if_blank:VF #1
      {
        \tl_if_blank:VF \l_arxiv_affiliation_text_tl
          {
            \tl_put_right:Nn
              \l_arxiv_affiliation_text_tl
              {,~}
          }

        \tl_put_right:NV
          \l_arxiv_affiliation_text_tl
          #1
      }
  }

\RenewDocumentCommand{\author}{O{}m}
  {
    \tl_clear:N \l_tmpa_tl

    \clist_map_inline:nn {#1}
      {
        \arxiv_get_affiliation_number:nN
          {##1}
          \l_tmpb_tl

        \tl_if_blank:VF \l_tmpa_tl
          {
            \tl_put_right:Nn \l_tmpa_tl {,}
          }

        \tl_put_right:NV
          \l_tmpa_tl
          \l_tmpb_tl
      }

    \tl_if_blank:nTF {#1}
      {
        \arxiv_original_author{#2}
      }
      {
        \use:x
          {
            \exp_not:N \arxiv_original_author
            [\exp_not:V \l_tmpa_tl]
            {\exp_not:n {#2}}
          }
      }
  }

\NewDocumentCommand{\affiliation}{O{}m}
  {
    \tl_clear:N \l_arxiv_org_tl
    \tl_clear:N \l_arxiv_address_tl
    \tl_clear:N \l_arxiv_city_tl
    \tl_clear:N \l_arxiv_postcode_tl
    \tl_clear:N \l_arxiv_state_tl
    \tl_clear:N \l_arxiv_country_tl
    \tl_clear:N \l_arxiv_affiliation_text_tl

    \keys_set:nn
      { arxiv / affiliation }
      {#2}

    \arxiv_append_affiliation_field:V \l_arxiv_org_tl
    \arxiv_append_affiliation_field:V \l_arxiv_address_tl
    \arxiv_append_affiliation_field:V \l_arxiv_city_tl
    \arxiv_append_affiliation_field:V \l_arxiv_postcode_tl
    \arxiv_append_affiliation_field:V \l_arxiv_state_tl
    \arxiv_append_affiliation_field:V \l_arxiv_country_tl

    \tl_if_blank:nTF {#1}
      {
        \use:x
          {
            \exp_not:N \arxiv_original_affil
            {\exp_not:V \l_arxiv_affiliation_text_tl}
          }
      }
      {
        \arxiv_get_affiliation_number:nN
          {#1}
          \l_tmpa_tl

        \use:x
          {
            \exp_not:N \arxiv_original_affil
            [\exp_not:V \l_tmpa_tl]
            {\exp_not:V \l_arxiv_affiliation_text_tl}
          }
      }
  }

\RenewDocumentEnvironment{abstract}{+b}
  {
    \tl_gset:Nn \g_arxiv_abstract_tl {#1}
  }
  { }

\NewDocumentEnvironment{highlights}{+b}
  { }
  { }

\NewDocumentEnvironment{keyword}{+b}
  {
    \tl_gset:Nn \g_arxiv_keywords_tl {#1}
  }
  { }

\NewDocumentCommand{\sep}{}
  {;\space}

\NewDocumentEnvironment{frontmatter}{}
  { }
  {
    \maketitle

    \arxiv_original_abstract_begin
      \tl_use:N \g_arxiv_abstract_tl
    \arxiv_original_abstract_end

    \par\medskip
    \noindent
    \textbf{Keywords: }
    \tl_use:N \g_arxiv_keywords_tl
    \par\bigskip
  }

\RenewDocumentCommand{\bibliographystyle}{m}
  {
    \arxiv_original_bibliographystyle{unsrt}
  }

\ExplSyntaxOff

\begin{document}

\begin{frontmatter}



\title{Beyond the Gegenbauer Paradigm: q-Orthogonal Kernels for Machine Learning}


\author[dept3]{Álvaro Sánchez-Paniagua Ríos}
\author[dept2]{Juan P. Llerena}
\author[dept3]{Alberto Lastra}
\author[dept4]{Nuria Torrado}
\author[dept3]{Edmundo J. Huertas}

\affiliation[dept3]{organization={Physics and Mathematics Department, University of Alcalá},
            country={Spain}}
            
\affiliation[dept2]{organization={Computer Science and Artificial Intelligence Department, University of Alcalá}, country={Spain}}

\affiliation[dept4]{organization={Mathematics Department, Universidad Autónoma de Madrid},
            country={Spain}}

            
           


\begin{abstract}
The performance of Support Vector Machines (SVMs) critically depends on the kernel function choice, which enables implicit mapping of data into high-dimensional feature spaces. While classical kernels like Radial Basis Function (RBF) remain popular, orthogonal polynomial kernels offer mathematically interpretable alternatives that can incorporate structured prior knowledge. This work extends the orthogonal polynomial kernel paradigm by introducing a novel family based on discrete $q$-Hermite I polynomials, a class of $q$-orthogonal polynomials that generalize classical Hermite polynomials through a deformation parameter $q$. We formally define the q-Hermite kernel and establish its validity under Mercer's theorem. The kernel's inherent boundedness properties naturally prevent annihilation and explosion effects without requiring explicit scaling mechanisms. Extensive experiments across 20 benchmark datasets demonstrate that the proposed kernel achieves competitive performance compared to both classical kernels and other orthogonal polynomial kernels, while offering advantages in numerical stability and computational simplicity. Our results confirm that $q$-orthogonal polynomials constitute a promising direction for kernel design, bridging mathematical elegance with practical machine learning applications, that provides conceptual and algorithmic resources that may be further extended to emerging quantum computing paradigms. To facilitate full reproducibility, we provide the complete implementation and experimental pipeline in an open-access GitHub repository at \url{https://github.com/Kokechacho/SVMs-QSVMs}.
\end{abstract}

\begin{highlights}
\item Novel SVM kernel based on $q$-Hermite $q$-orthogonal polynomials
\item The inherent boundedness of the q-polynomial prevents both annihilation and explosion effects without scaling functions
\item Theoretical validation under Mercer's theorem conditions
\item Competitive performance across 20 diverse benchmark datasets
\item Computational advantages over existing orthogonal polynomial kernels
\end{highlights}

\begin{keyword}
Support Vector Machines \sep Kernel Methods \sep Orthogonal Polynomials \sep 
Discrete $q$-Hermite I Polynomials \sep $q$-Orthogonal Polynomials \sep 
Machine Learning \sep Pattern Recognition
\end{keyword}

\end{frontmatter}



\section{Introduction}
\label{secintro}

Support Vector Machines (SVMs) have emerged as one of the most powerful supervised learning algorithms for both binary and multiclass classification tasks \cite{chandra2021survey, huang2018applications}. The fundamental principle underlying SVMs is the kernel trick, which enables the efficient computation of inner products in high-dimensional (potentially infinite-dimensional) feature spaces without explicitly computing the feature mappings. According to Mercer's theorem \cite{mercer1909xvi}, a symmetric continuous function $K(x,y)$ defined on a compact set is a valid kernel function if and only if it is positive semi-definite, meaning that for any finite set of points and corresponding coefficients, the resulting Gram matrix is positive semi-definite. This theoretical foundation ensures that any kernel satisfying these conditions corresponds to an implicit feature space where SVMs can operate effectively.

The kernel trick transforms the SVM optimization problem into a dual formulation that depends only on kernel evaluations between training samples, rather than on explicit feature vectors \cite{shawe2011review}. This computational strategy has made SVMs applicable to a wide range of problems, including text classification, face detection, and biological sequence analysis \cite{goudjil2018novel, guo2000face, yang2004biological}. This versatility has even inspired extensions into emerging computational paradigms, most notably Quantum Support Vector Machines (QSVMs) \cite{akrom2024quantum, gentinetta2203complexity, hossain2025performance}, which leverage quantum circuits to compute kernel functions by exploiting the representational richness of quantum feature mappings and the natural ability of quantum systems to encode high-dimensional inner products \cite{park2020practical}.

The choice of kernel function plays a central role in determining the adaptability of SVMs to the underlying data structure, thereby directly influencing both their effectiveness and predictive accuracy \cite{khan2024ensemble}. Although linear kernels suffice for linearly separable datasets, real-world problems frequently require more expressive nonlinear transformations. Classical kernel families, most notably the Radial Basis Function (RBF) \cite{thurnhofer2020radial} and polynomial kernels, remain popular due to their flexibility and well-established theoretical foundations \cite{ben2008support}. Nevertheless, these kernels are not universally optimal across all data distributions, which motivates the exploration of more specialized and structurally informed kernel constructions \cite{du2024exploring, chapelle1999model, guido2024overview}.

Orthogonal polynomial kernels have emerged as a promising alternative, valued for their capacity to incorporate structured prior knowledge and for their inherent mathematical interpretability \cite{taheri2024orthogonal, moghaddam2024rational}. Initial developments in this direction relied on well-known families of orthogonal polynomials, including Legendre, Chebyshev, and Hermite polynomials, to construct valid Mercer kernels \cite{moghaddam2016new, padierna2018novel}. 

In this work, we extend this line of research by introducing $q$-orthogonal polynomials as a foundation for kernel design. The principal advantage of this approach lies in the enhanced flexibility and adaptability offered by the deformation parameter $q$, which enables fine-grained control over kernel properties without compromising mathematical tractability. Unlike classical orthogonal polynomial kernels with fixed recursive structures, $q$-orthogonal polynomials provide a continuous spectrum of kernel behaviors that can be tailored to specific data characteristics \cite{cao2023review, koekoek2010hypergeometric}. This mathematical framework bridges classical kernel methods with quantum-inspired computational paradigms, offering some advantages of quantum representations while remaining fully implementable on classical hardware \cite{riley2021bivariate}.

Specifically, we investigate the discrete $q$-Hermite I polynomial family, a class of $q$-orthogonal polynomials that generalizes classical Hermite polynomials through a deformation parameter $q$ \cite{asai2020deformed}. The distinctive advantage of these polynomials resides in their intrinsic boundedness and numerical stability properties, which directly address practical limitations of orthogonal polynomial kernels. Traditional kernels often suffer from numerical instability, including value explosion that requires ad hoc scaling mechanisms \cite{padierna2018novel}. In contrast, discrete $q$-Hermite I polynomials naturally maintain bounded values across their domain, enabling robust kernel construction that mitigates both annihilation and explosion phenomena without requiring external normalization. This property is especially beneficial for SVM applications that involve very large datasets or high-dimensional feature spaces, where computational efficiency becomes a limiting factor \cite{moghaddam2024rational}.

We, therefore, make four main contributions. First, we formally define the q-Hermite kernel and establish its validity under Mercer’s theorem. Second, we show that its inherent boundedness ensures numerical stability and removes the need for explicit scaling functions. Third, we develop a fully reproducible experimental pipeline, including hyperparameter optimization, cross-validation, and kernel evaluation routines, and we release it as an open-source implementation (see \hyperref[data]{Data Availability}). Finally, through extensive experiments, we demonstrate that the proposed kernel achieves competitive performance relative to both classical and other orthogonal polynomial kernels.

This paper is organized as follows: Section~\ref{secwork} reviews related work on orthogonal polynomial kernels, introducing the theory of $q$-orthogonal polynomials. Section~\ref{secalsal} details the construction of the q-Hermite kernel and its theoretical properties. Section~\ref{secmet} describes the experimental methodology, and Section~\ref{secexp} presents and discusses the results. Finally, Section~\ref{secconc} concludes the paper and outlines directions for future research.

\section{Related work}
\label{secwork}

The application of orthogonal polynomials to kernel construction represents an important research direction that extends previous work on kernel formulation and machine learning. Early work in this area explored individual families of classical orthogonal polynomials. Padierna et al. (2018) \cite{padierna2018novel} introduced a novel formulation of orthogonal polynomial kernels for SVM classifiers based on the Gegenbauer family of polynomials. This work unified and extended previous isolated studies of Legendre, Chebyshev, and Hermite polynomial kernels \cite{ye2006support, ozer2011set, zhao2013adaptive}, providing a systematic framework for constructing orthogonal polynomial-based kernels.

The Gegenbauer formulation addresses two critical issues that arise when using orthogonal polynomial kernels: the annihilation effect and the explosion effect. The annihilation effect occurs when kernel values become too small or zero, leading to a loss of discriminative information, while the explosion effect refers to the uncontrolled growth of kernel values outside specified domains. Padierna et al. \cite{padierna2018novel} resolved these issues by introducing a scaling function that constrains the polynomial amplitudes outside the natural domain $[-1,1]$, and a weight function specifically designed to maintain the stability of the kernel values \cite{tian2017some}. Their experimental results demonstrated that the Gegenbauer kernel family exhibits competitive accuracy with the widely-used RBF kernel while requiring significantly fewer support vectors, suggesting computational and interpretability advantages.

Formally, the Gegenbauer kernel is defined as

\begin{equation}
\label{eq:KGegen}
    K_{\mathrm{Geg}}(\textbf{x}, \textbf{z}) 
    = \prod_{j=1}^{d} 
    \sum_{i=0}^{n} 
    C^{\alpha}_{i}(x_j) \, C^{\alpha}_{i}(z_j) \, 
    w_{\alpha}(x_j, z_j) \, 
    u(C^{\alpha}_{i})^{2},
\end{equation}

where the normalization and weighting functions are given by

\begin{equation}
    u(C^{\alpha}_{i}) = \frac{1}{\sqrt{n+1}\,|C^{\alpha}_{i}(1)|},
\end{equation}

and

\begin{equation}
    w_{\alpha}(x, z) = 
    \big[(1-x^{2})(1-z^{2})\big]^{\frac{\alpha - 1}{2}} + \epsilon,
\end{equation}

with $\epsilon = 0.1$ when $\alpha>0.5$ ensuring numerical stability for inputs near the domain boundaries \cite{padierna2018novel}.

\subsection{q-Orthogonal Polynomials}
\label{secpoly}

The extension of orthogonal polynomials to q-analogues represents a profound mathematical development that has received limited attention in the kernel methods literature. Q-orthogonal polynomials (also called basic hypergeometric orthogonal polynomials) generalize classical polynomials by introducing a deformation parameter $q \in (0, 1)$, thereby creating a richer mathematical structure with remarkable properties. These q-polynomials satisfy q-difference equations instead of differential equations, yet inherit orthogonality properties and three-term recurrence relations from their classical counterparts. Importantly, certain families of q-orthogonal polynomials exhibit inherent boundedness properties that naturally prevent annihilation and explosion effects without requiring explicit scaling functions, potentially offering both mathematical elegance and computational advantages over classical polynomial kernels.

\subsubsection{Mathematical Foundations: q-Pochhammer and q-Binomials}
\label{subqp}

The theory of $q$-orthogonal polynomials represents a rich generalization of classical orthogonal polynomial theory. $q$-analogues emerge naturally in various mathematical contexts: the Rogers--Ramanujan identities in partition theory \cite{andrews1998theory}, quantum group theory \cite{kassel2012quantum}, and the representation theory of quantum algebras. The parameter $q$, typically chosen from the interval $(0, 1)$, acts as a deformation parameter: as $q \to 1$, $q$-orthogonal polynomials recover their classical (non-$q$) counterparts through limiting relationships \cite{koekoek2010hypergeometric}.

The $q$-Pochhammer symbol, defined as

\begin{equation}
    (a;q)_n = \prod_{m=0}^{n-1} (1-aq^m),
\end{equation}

serves as the fundamental building block for $q$-special functions. Its infinite product generalization is denoted by 

\begin{equation}
(a;q)_\infty = \prod_{m=0}^{\infty} (1-aq^m) \quad \text{for } |q|<1 
\end{equation}

The $q$-Pochhammer symbol satisfies crucial identities including the $q$-binomial theorem and the $q$-Vandermonde identity, which are $q$-analogues of classical binomial identities \cite{gasper2011basic}. These identities are essential for manipulating $q$-series and deriving properties of $q$-orthogonal polynomials.

The $q$-binomial coefficient is defined as

\begin{equation}
    \begin{bmatrix} n \\ k \end{bmatrix}_q = \frac{(q;q)_n}{(q;q)_k(q;q)_{n-k}}, \quad 0 \leq k \leq n,
\end{equation}

where $(q;q)_n$ represents the $q$-Pochhammer symbol with $a=q$ \cite{gasper2011basic, ernst2012comprehensive}. This coefficient satisfies remarkable properties: it reduces to the classical binomial coefficient $\binom{n}{k}$ as $q \to 1$, and it satisfies $q$-difference recurrence relations analogous to Pascal's triangle \cite{andrews1998theory}.

The classification of $q$-orthogonal polynomials is systematically organized through the extended Askey scheme (also called the $q$-Askey scheme), which provides a hierarchy connecting various families of $q$-orthogonal polynomials through limit relations \cite{koekoek2010hypergeometric}. This scheme includes families such as the $q$-Hahn polynomials (at the top), intermediate families, and limiting cases including $q$-Hermite and $q$-Laguerre polynomials. The structure of the Askey scheme enables understanding the relationships between different polynomial families and justifies the selection of promising candidates for kernel construction.

\section{The q-Hermite kernel functions}
\label{secalsal}

In this work, we introduce families of $q$-polynomials, specifically, the discrete $q$-Hermite I polynomials, as valid candidates for kernel formulation. In the following sections, their main properties are analyzed, and following the framework proposed by \cite{padierna2018novel}, a new family of kernels with its corresponding weight function is constructed.

\newcommand{\kerlinear}{K_{\text{Linear}}}
\newcommand{\kerpoly}{K_{\text{Poly}}}
\newcommand{\kerrbf}{K_{\text{RBF}}}
\newcommand{\kerhermite}{K_{\text{s-Herm}}}
\newcommand{\kergegen}{K_{\text{Gegen}}}
\newcommand{\keralsalam}{K_{\text{Q-HERMITE}}}

\begin{table}[htbp]
\centering
\caption{List of the kernels discussed in this work.}
\label{tab:orthogonal_kernels}
\small
\begin{tabularx}{\textwidth}{l l X}
\toprule
\textbf{Kernel} & \textbf{Short Label} & \textbf{Observations} \\
\midrule
\hypertarget{ker:linear}{Linear Kernel \cite{padierna2018novel}} 
& $K_{\text{Linear}}$
& Baseline kernel corresponding to the standard inner product; no feature mapping. \\

\hypertarget{ker:poly}{Polynomial Kernel \cite{padierna2018novel}} 
& $K_{\text{Poly}}$
& Extends the linear kernel by including polynomial interactions of degree $d$. \\

\hypertarget{ker:rbf}{RBF Kernel \cite{padierna2018novel}} 
& $K_{\text{RBF}}$
& Maps data into an infinite-dimensional feature space using a Gaussian basis function. \\

\hypertarget{ker:hermite}{Hermite Kernel \cite{padierna2018novel}} 
& $K_{\text{s-Herm}}$
& Slight modification from the baseline formulation, adjusting the scaling factor to $2^{-2i}$. \\

\hypertarget{ker:gegen}{Gegenbauer Kernel \cite{padierna2018novel}} 
& $K_{\text{Gegen}}$
& Derived from Gegenbauer polynomials; generalizes Legendre and Chebyshev kernels. \\

\hypertarget{ker:alsalam}{Q-Hermite Kernel \cite{hermoso2020second}} 
& $K_{\text{Q-HERMITE}}$
& Constructed from $q$-orthogonal discrete $q$-Hermite I polynomials. \\
\bottomrule
\end{tabularx}
\end{table}

\subsection{q-Hermite polynomials}

The discrete $q$-Hermite I polynomials \(H_n(x;q)\) form a family of \(q\)-orthogonal polynomials \cite{kim1997combinatorics}. They are characterized by the three-term recurrence
\begin{equation}\label{eq:recurrence}
x\,H_n(x;q) = H_{n+1}(x;q) + \beta_n\,H_n(x;q) + \gamma_n\,H_{n-1}(x;q), \qquad n \ge 0,
\end{equation}
with initial conditions \(H_0(x;q) = 1\), \(H_{-1}(x;q) = 0\), and coefficients
\[
\beta_n = (a + 1) q^n, \qquad \gamma_n = -a q^{n-1} (1 - q^n).
\]

The explicit form of the discrete $q$-Hermite I polynomials is 
\begin{equation}\label{eq:explicit}
H_n(x;q) = \sum_{i=0}^n
\begin{bmatrix} n \\ i \end{bmatrix}_q
(-1)^{n-i} q^{\frac{(n-i)(n-i-1)}{2}}
\prod_{j=0}^{i-1} (x - a q^j).
\end{equation}
Expanding the product and reordering the sums yields the equivalent simplified representation
\begin{equation}\label{eq:simplified}
H_n(x;q) = \sum_{i=0}^n
\begin{bmatrix} n \\ i \end{bmatrix}_q
(-1)^i x^{n-i}
\left(
\sum_{j=0}^i
\begin{bmatrix} i \\ j \end{bmatrix}_q
q^{\frac{(i-j)(i-j-1)}{2} + \frac{j(j-1)}{2}} a^j
\right).
\end{equation}

The discrete $q$-Hermite I polynomials \(H_n(x;q)\) depend on two parameters \(a\) and \(q\):
\[
a\in(-\infty,1),\qquad 0<q<1.
\]
The parameter $a$ determines the location of the roots of the discrete $q$-Hermite I polynomials and, consequently, the domain on which the associated kernel is naturally supported.  In this work we restrict ourselves to the case $a=-1$, which induces a well-structured and compact support on the interval $[-1,1]$, a domain that aligns naturally with standard data pre-processing practices in SVMs, where features are often scaled to a bounded range.  

This specialization is not merely a technical convenience: when $a=-1$, the discrete $q$-Hermite I polynomials form a distinguished subfamily that is closely connected to the $q$-Hermite polynomials, a class with deep relevance in quantum calculus and mathematical physics~\cite{gasper2011basic,koekoek2010hypergeometric}.

The parameter $q$ acts as the deformation parameter governing the transition between classical and $q$-deformed regimes.  Intermediate values of $q$ yield polynomial bases whose behavior interpolates smoothly between the classical Hermite case and strongly deformed limits, thus providing a flexible mechanism for adjusting the kernel’s approximation and regularization properties to the characteristics of each dataset.

Within this $a=-1$ setting, the limiting behavior is well understood: as $q \to 1^{-}$, the discrete $q$-Hermite I polynomials converge to the classical Hermite polynomials through established asymptotic relations~\cite{baker2000multivariable}. Thus, the kernel used in this study can be interpreted as a $q$-deformed Hermite kernel supported on $[-1,1]$, which is precisely the regime explored throughout this work.

For $a=-1$ then \(\beta_n=0\)  \(\forall n\), and
\[
\gamma_n = q^{\,n-1}(1-q^n),
\]
so the recurrence simplifies to
\begin{equation}\label{eq:recurrence-a=-1}
x\,H_n(x;q)=H_{n+1}(x;q)+q^{\,n-1}(1-q^n)\,H_{n-1}(x;q),\qquad n\ge0,
\end{equation}
with \(H_0(x;q)=1\), \(H_{-1}(x;q)=0\).

Expanding the recurrence from equation \eqref{eq:recurrence-a=-1} gives the first polynomials (up to \(n=4\)):
\[
\begin{aligned}
H_0(x;q) &= 1,\\[4pt]
H_1(x;q) &= x,\\[4pt]
H_2(x;q) &= x^2 - (1-q) = x^2 -1 + q,\\[6pt]
H_3(x;q) &= x^3 + (q^3-1)\,x,\\[6pt]
H_4(x;q) &= x^4 + (q^5+q^3-q^2-1)\,x^2 + (q^6-q^5-q^3+q^2).
\end{aligned}
\]

\begin{figure}[h!]
    \centering
    \includegraphics[width=\textwidth]{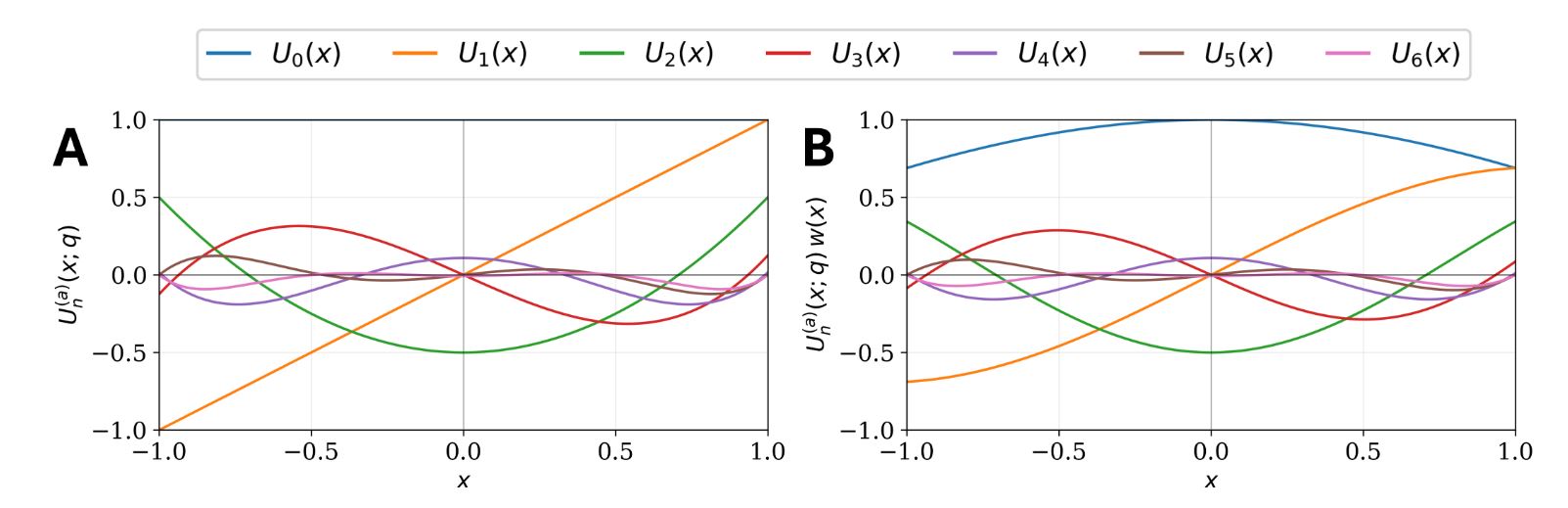}
    \caption{discrete $q$-Hermite I polynomials
\textbf{(A)} Polynomials without the weight function for \( q = 0.5 \). 
\textbf{(B)} Polynomials incorporating the weight function for \( q = 0.5 \).}
    \label{fig:conv}
\end{figure}

\subsubsection{The weight and scaling functions: kernel construction}

The weight function used in our case is taken from \cite{hermoso2020second}:

\begin{equation}
w_{a,q}(x) = (qx,a^{-1}qx;q)_\infty
\end{equation}

To obtain the bivariate version, we simply multiply the two univariate functions, as specified in \cite{xu2004discrete}, yielding:

\begin{equation}
w_{a,q}(x,z) = (qx,a^{-1}qx;q)\infty \cdot (qz,a^{-1}qz;q)\infty
\end{equation}

Substituting $a=-1$ then gives:

\begin{equation}
\label{eq:weightal}
w_{-1,q}(x,z) = (qx,-qx;q)\infty \cdot (qz,-qz;q)\infty
\end{equation}

\begin{figure}[h!]
    \centering
    \includegraphics[width=\textwidth]{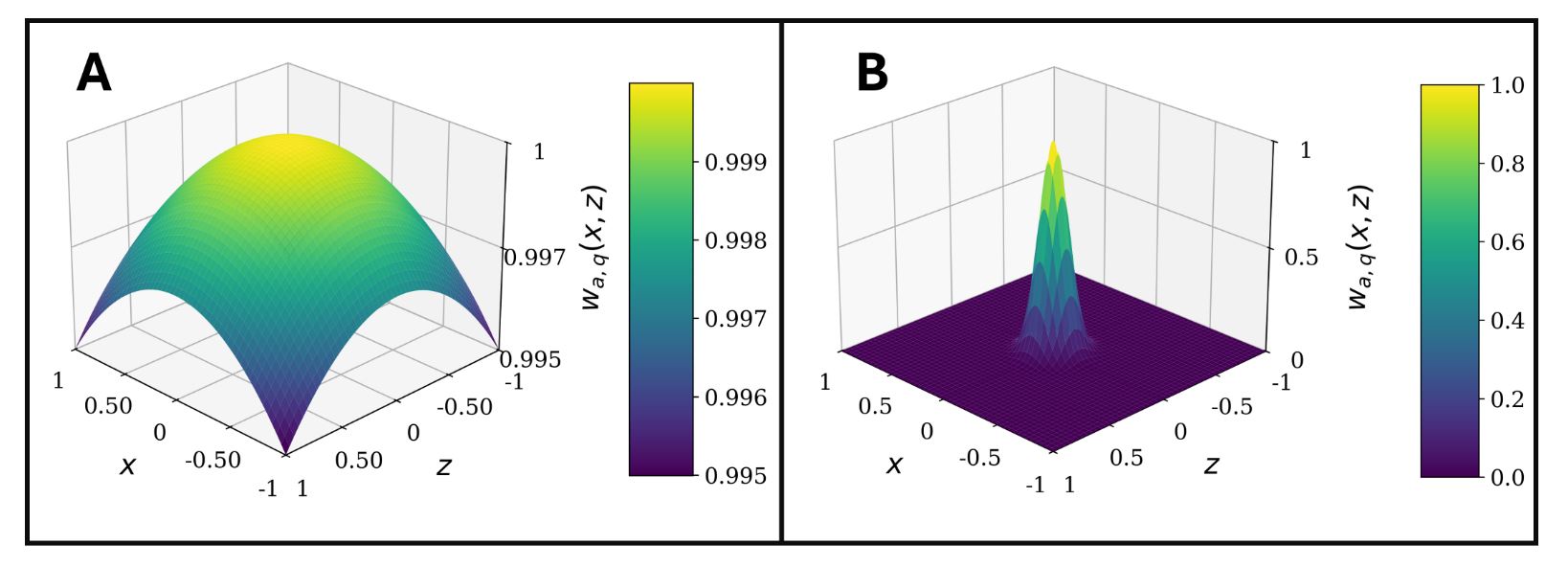}
    \caption{Bivariate weight function $w_{-1,q}(x,z)$ comparison for the q-Hermite polynomials. 
\textbf{(A)} Weight function for \( q = 0.05 \). 
\textbf{(B)} Weight function for \( q = 0.99 \).}
    \label{fig:weights}
\end{figure}

In the approach by \cite{padierna2018novel}, a scaling function is now required to constrain the amplitudes of Gegenbauer polynomials, which diverge beyond the interval $[-1,1]$ for certain parameter values. In contrast, as demonstrated in the following section, our method eliminates this necessity due to the inherent boundedness properties of the employed polynomial family. This yields significant advantages in both computational efficiency and mathematical simplicity.

Following the framework established in \cite{padierna2018novel}, our kernel is defined as:

\begin{equation}
\label{kalsalam}
    K_{\text{Q-HERMITE}}(\textbf{x},\textbf{z}) = \prod^d_{j=1}\sum^n_{i=0} H_{i}(x_j;q) H_{i}(z_j;q) \cdot w_{q}(x_j,z_j)
\end{equation}

\subsection{Theoretical properties}
\label{subthe}

According to Mercer's theorem \cite{mercer1909xvi}, a valid kernel function must be symmetric and positive semidefinite, ensuring that its expansion in terms of eigenfunctions and eigenvalues converges absolutely and uniformly.  To establish that our kernel construction satisfies these fundamental requirements, we examine the key properties that enter Mercer's hypothesis and explain briefly why each is relevant for both the theory and numerical practice.

First, it is essential that the polynomial basis employed in the kernel expansion remains uniformly bounded over the compact domain of interest. This condition can be satisfied either through the introduction of an explicit scaling function, as proposed in \cite{padierna2018novel}, or inherently, through the analytical properties of the kernel itself, as occurs in our case. Specifically, we consider the discrete $q$-Hermite I family (equivalently, the $a=-1$ specialization) with \(0<q<1\), for which we establish the following uniform boundedness property.


Before introducing our kernel construction, we record an explicit uniform
bound on the growth of the monic discrete $q$-Hermite~I polynomials on the
compact interval $[-1,1]$, which will be used later to control sup-norm
effects in the orthogonality regime.



\begin{proposition}
\label{prop:boundedness} Let $0<q<1$. The sequence of monic discrete $q$%
-Hermite~I polynomials $\{H_n(x;q)\}_{n=0}^{\infty}$ satisfies the following
bound. For every integer $n$, 
\begin{equation}  \label{eq:boundedness-qHermite}
\bigl\|H_n(\cdot;q)\bigr\|_{\infty,[-1,1]} \le \left[ \frac{5-3\sqrt{5}}{10}%
\left(\frac{1-\sqrt{5}}{2}\right)^n + \frac{5+3\sqrt{5}}{10}\left(\frac{1+%
\sqrt{5}}{2}\right)^n \right] \left(\frac{(q;q)_{\infty}}{(q;q)_{n}\,q^{n/2}}%
\right).
\end{equation}
\end{proposition}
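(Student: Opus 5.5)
The plan is to bound $M_n:=\|H_n(\cdot;q)\|_{\infty,[-1,1]}$ by induction directly from the three-term recurrence \eqref{eq:recurrence-a=-1}, and then compare the result with the right-hand side of \eqref{eq:boundedness-qHermite}.

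\textbf{Step 1: identify the bracket.} The bracket in \eqref{eq:boundedness-qHermite} is a Binet-type expression $A_n=\alpha\varphi^n+\bar\alpha\bar\varphi^n$, with $\varphi=(1+\sqrt5)/2$ and $\bar\varphi=(1-\sqrt5)/2$. It therefore satisfies $A_{n+1}=A_n+A_{n-1}$. Evaluating at $n=0$ and $n=1$ gives $A_0=1$ and $A_1=2$. Hence $A_n=F_{n+2}$, the shifted Fibonacci numbers $1,2,3,5,8,\dots$

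\textbf{Step 2: Fibonacci-type estimate for $M_n$.} For $x\in[-1,1]$ and $n\ge1$, the recurrence gives
\[
|H_{n+1}(x;q)|\le |x|\,|H_n(x;q)|+q^{n-1}(1-q^n)\,|H_{n-1}(x;q)| .
\]
Since $0<q<1$ and $n\ge1$, we have $q^{n-1}(1-q^n)\le 1$. Taking suprema yields $M_{n+1}\le M_n+M_{n-1}$. The base values are $M_0=1\le A_0$ and $M_1=1\le A_1$, so induction gives $M_n\le A_n$ for every $n\ge0$. The $n=0$ term of the recurrence causes no difficulty, because $H_{-1}=0$.

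\textbf{Step 3: the $q$-dependent factor (the main obstacle).} To deduce \eqref{eq:boundedness-qHermite} from Step 2, one needs
\[
c_n(q):=\frac{(q;q)_\infty}{(q;q)_n\,q^{n/2}}=\frac{(q^{n+1};q)_\infty}{q^{n/2}}\ \ge 1 .
\]
This fails in general. At $n=0$ we get $c_0=(q;q)_\infty<1$, while $H_0\equiv1$. So the statement as written cannot hold at $n=0$, and more generally it can fail whenever $c_n(q)<1$, which happens for $q$ close to $1$ and small $n$.

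\textbf{What I would try.} First, I would attempt a sharper weighted induction that actually produces the factor $c_n$. Define $N_n:=M_n\,(q;q)_n\,q^{n/2}$. Then use $\gamma_n=q^{n-1}(1-q^n)$ together with $(q;q)_{n}=(1-q^n)(q;q)_{n-1}$ to derive a recurrence of the form $N_{n+1}\le \theta_n N_n+\theta_n' N_{n-1}$ with $\theta_n,\theta_n'\le1$. This would give $N_n\le A_n\,N_0$-type bounds. Because of the $n=0$ counterexample, I expect this to yield the displayed inequality only after the normalising constant $(q;q)_\infty$ is replaced by $1$, or only for $n$ large enough that $q^{-n/2}(q^{n+1};q)_\infty\ge1$.

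Absent such a fix, I would prove the clean bound $M_n\le A_n$ from Step 2, which is $q$-uniform and suffices for the later uniform-boundedness use. I would then state \eqref{eq:boundedness-qHermite} with the corrected factor $\max\{1,c_n(q)\}$.
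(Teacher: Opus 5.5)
Your Steps 1 and 2 are the paper's argument. The paper bounds $|H_{n+1}|\le |H_n|+|\gamma_n|\,|H_{n-1}|$ on $[-1,1]$ using $\gamma_n\le 1$. It then compares $\sup_{[-1,1]}|H_n|$ with the sequence $C(0)=1$, $C(1)=2$, $C(n+1)=C(n)+C(n-1)$, and writes $C(n)$ in Binet form, which is the bracket. The paper's write-up takes $M_1=2$, although $\sup_{[-1,1]}|x|=1$, and it states the comparison as $C(n)\le\sup|H_n|$, i.e.\ reversed. Your version, with $M_1=1\le A_1$ and $M_n\le A_n=F_{n+2}$, is the correct one.

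Your Step~3 objection is also correct, and it is exactly where the paper's proof breaks down.
\begin{itemize}
\item At $n=0$ the right-hand side equals $(q;q)_\infty=\prod_{m\ge1}(1-q^m)$. This is strictly less than $1=\|H_0\|_{\infty,[-1,1]}$ for every $q\in(0,1)$, so the displayed inequality is false as stated.
\item It also fails at $n=1$ once $q$ is not small. There the right-hand side is $2(q^2;q)_\infty\,q^{-1/2}$, which tends to $0$ as $q\to1^-$. At $q=0.7$ it is about $0.34$, while $\|H_1\|_\infty=1$.
\end{itemize}

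The paper passes from $\sup_{[-1,1]}|H_n|\le C(n)$ to the displayed bound only by citing the normalisation $\widetilde H_n=\frac{(q;q)_\infty}{(q;q)_n\,q^{\binom{n}{2}}}H_n$. That identity does not do the job, for three reasons:
\begin{itemize}
\item it concerns the rescaled polynomial $\widetilde H_n$, not the monic $H_n$ that the statement is about;
\item it carries $q^{\binom{n}{2}}$ rather than $q^{n/2}$;
\item the bound $M_n\le F_{n+2}$ implies $M_n\le F_{n+2}\,c_n(q)$ only when $c_n(q)\ge 1$.
\end{itemize}
What the argument actually proves is your $q$-uniform bound $\|H_n\|_{\infty,[-1,1]}\le F_{n+2}$. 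Equivalently, $\|\widetilde H_n\|_\infty$ is at most $F_{n+2}$ times the normalising constant. Your plan to prove that bound and state the $q$-factor as $\max\{1,c_n(q)\}$ is the right repair.

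Two minor remarks. First, the weighted induction you sketch does work, with $\theta_n=(1-q^{n+1})q^{1/2}$ and $\theta_n'=(1-q^{n+1})(1-q^n)^2q^n$. However, it only yields $M_n\le F_{n+2}/\bigl((q;q)_n\,q^{n/2}\bigr)$, which is weaker than $M_n\le F_{n+2}$, so you can omit it. Second, failure at a given $n$ requires $c_n(q)<M_n/F_{n+2}$, not merely $c_n(q)<1$. This matters because $M_n$ is typically well below $F_{n+2}$; for example $M_1=1<2$ and $M_2=\max\{q,1-q\}<3$.
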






\begin{proof}
For all $x\in\mathbb{R}$ and all positive integers $n$, the sequence $H_n(x;q)$ satisfies
the recurrence
\[
xH_n(x;q)=H_{n+1}(x;q)+\gamma_n\,H_{n-1}(x;q),
\]
with $\gamma_n=q^{n+1}(1-q^{n})\le 1$ for all $n$.
We define
\[
M_0=\sup_{x\in[-1,1]}|H_0(x;q)|=1,
\qquad
M_1=\sup_{x\in[-1,1]}|H_1(x;q)|=2.
\]
For all integers $n\ge 1$ and all $x\in[-1,1]$ one has
\[
|H_{n+1}(x;q)|
\le
|H_{n}(x;q)|+|\gamma_n|\,|H_{n-1}(x;q)|
\le
|H_{n}(x;q)|+|H_{n-1}(x;q)|.
\]
Therefore,
\[
\sup_{x\in[-1,1]}|H_{n+1}(x;q)|
\le
\sup_{x\in[-1,1]}|H_{n}(x;q)|
+
\sup_{x\in[-1,1]}|H_{n-1}(x;q)|.
\]
Let us define the recurrence $C(0)=1$, $C(1)=2$ and $C(n+1)=C(n)+C(n-1)$ for all $n\ge 1$.
It is clear from the definition of the sequence that
\[
C(n)\le \sup_{x\in[-1,1]}|H_{n}(x;q)|,\qquad n \ge 0,
\]
and the solution of $\{C(n)\}_{n\ge0}$ is given by
\[
C(n)=
\frac{5-3\sqrt{5}}{10}\left(\frac{1-\sqrt{5}}{2}\right)^n
+
\frac{5+3\sqrt{5}}{10}\left(\frac{1+\sqrt{5}}{2}\right)^n,
\qquad n\ge0.
\]
We take into account that
\[
\widetilde{H}_n(x;q)
=
\frac{H_n(x;q)}{\bigl\|H_n(x;q)\bigr\|_{L^2(\mu_q)}}
=
\frac{(q;q)_{\infty}}{(q;q)_n\,q^{\binom{n}{2}}}\,H_n(x;q),
\]
to conclude the proof.
\end{proof}



\subsubsection{Prevention of adverse annihilations and explosion effects}

Once Proposition~1 has been established, we tackle the weight function associated with the q-Hermite kernel. Improper handling of this weight may lead to the so-called \emph{explosion} or \emph{annihilation effects} \cite{ye2006support, ozer2011set}.  
In the work of \cite{padierna2018novel}, when introducing the Gegenbauer-based kernel, the authors modified its weight function by adding a small offset $\epsilon$, in order to bound it and prevent those adverse effects. 

In our formulation, such a correction is not required.  
This follows from the fact that we can analytically prove that our weight function remains strictly within the interval~$(0,1]$.  
Combined with Proposition~1, this property guarantees the absence of both annihilation and explosion effects.

Nevertheless, from a practical standpoint, to avoid the use of extremely small terms (near–zero), we follow the same strategy adopted by Padierna et al.~\cite{padierna2018novel}. Specifically, whenever the weight function satisfies $w_{-1,q}(x,z) < 0.1$, it is clipped to $w_{-1,q}(x,z) := 0.1$, which introduces negligible computational overhead.

\medskip
\noindent\textbf{Proposition 2.}  
For $a=-1$, $0<q<1$, and $x,z\in[-1,1]$, the weight function
\[
w_{-1,q}(x,z) = (q x, -q x; q)_\infty \cdot (q z, -q z; q)_\infty,
\]
satisfy
\[
 0 < w_{-1,q}(x,z) \le 1.
\]
\begin{proof}
We start from the elementary identity
\[
(1 - z q^k)(1 + z q^k) = 1 - z^2 q^{2k}, \quad \text{for all } k \ge 0.
\]
Taking the infinite product over $k$ gives the well-known relation between $q$-shifted factorials:
\[
(z;q)_\infty (-z;q)_\infty = (z^2;q^2)_\infty.
\]
Substituting $z = qx$ into this identity, we obtain
\[
w_{-1,q}(x) = (qx, -qx; q)_\infty = (q^2 x^2; q^2)_\infty = \prod_{m=1}^\infty (1 - q^{2m} x^2).
\]

Since $|x| \le 1$ and $0 < q < 1$, each factor in the product satisfies $0 < 1 - q^{2m} x^2 \le 1$.  
Consequently, every term of the infinite product lies strictly between $0$ and $1$, ensuring that $w_{-1,q}(x) \in (0,1]$.  
For the bivariate case, we recall that the scaling function is defined multiplicatively as
\[
w_{-1,q}(x,z) = w_{-1,q}(x) \, w_{-1,q}(z).
\]
Since both univariate factors satisfy $0 < w_{-1,q}(\cdot) \le 1$, their product does as well.  
Thus,
\[
0 < w_{-1,q}(x,z) \le 1, \quad \forall (x,z) \in [-1,1]^2
\]
\end{proof}


\begin{proposition}
\label{propWeight} For $0<q<1$ and $x,z\in[-1,1]$, the bivariate weight 
\begin{equation*}
w_{q}(x,z)=(qx,-qx;q)_{\infty}\,(qz,-qz;q)_{\infty}
\end{equation*}
satisfies 
\begin{equation*}
0<w_{q}(x,z)\le 1.
\end{equation*}
\end{proposition}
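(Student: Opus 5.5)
The plan is to reduce the bivariate statement to a univariate one and then handle the univariate weight through the $q$-Pochhammer factorization. Since $w_q(x,z)=w_q(x)\,w_q(z)$ with $w_q(x)=(qx,-qx;q)_\infty$, it suffices to prove $0<w_q(x)\le 1$ for every $x\in[-1,1]$. Multiplying two numbers in $(0,1]$ again gives a number in $(0,1]$, which then yields the claim on $[-1,1]^2$.

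For the univariate factor, I would pair the two infinite products termwise. The identity $(1-xq^{k+1})(1+xq^{k+1})=1-x^2q^{2k+2}$ gives
\[
w_q(x)=\prod_{m=1}^{\infty}\bigl(1-q^{2m}x^2\bigr)=(q^2x^2;q^2)_\infty .
\]
Regrouping factors is legitimate because both $(qx;q)_\infty$ and $(-qx;q)_\infty$ converge absolutely: $\sum_k |x|q^{k+1}<\infty$ for $0<q<1$. The partial products therefore converge, and the partial product of the paired expression is exactly the product of the two partial products.

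Next I would bound each factor. For $|x|\le 1$ and $m\ge1$, we have $0\le q^{2m}x^2\le q^{2m}\le q^2<1$, so every factor lies in $[1-q^{2m},1]\subset(0,1]$. All partial products are therefore in $(0,1]$ and nonincreasing in the number of factors. Hence the limit exists and is $\le 1$, with equality exactly when $x=0$.

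The only genuine obstacle is strict positivity. An infinite product of factors in $(0,1)$ can tend to $0$, so I would not rely on the factorwise bound alone. Instead I would use the lower bound $w_q(x)\ge\prod_{m\ge1}(1-q^{2m})=(q^2;q^2)_\infty$. This product is strictly positive because $\sum_m q^{2m}<\infty$, by the standard criterion that $\prod(1-a_m)$ with $0\le a_m<1$ and $\sum a_m<\infty$ converges to a nonzero limit. One way to see this is that $\log\prod(1-a_m)=\sum\log(1-a_m)\ge -\sum a_m/(1-a_m)>-\infty$. This gives the uniform bound
\[
(q^2;q^2)_\infty\le w_q(x)\le 1 \quad\text{for all } x\in[-1,1],
\]
and hence $(q^2;q^2)_\infty^2\le w_q(x,z)\le 1$ on $[-1,1]^2$. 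This is slightly stronger than the stated $0<w_q(x,z)\le1$.
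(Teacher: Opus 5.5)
Your proof is correct and follows essentially the paper's route: pair the factors into $1-q^{2j+2}x^2$, bound the partial products in $[0,1]$, and pass to the limit (the paper's earlier duplicate statement of this result also reduces to the univariate $(q^2x^2;q^2)_\infty$, as you do). Your uniform lower bound $w_q(x,z)\ge (q^2;q^2)_\infty^2>0$ is more rigorous than the paper's positivity step, which only notes that no factor vanishes; that is not sufficient for an infinite product, and the paper's bound on $\sum\ln(\cdot)$ bounds nonpositive terms from above, so it cannot rule out divergence to $-\infty$. You also treat all of $[-1,1]$, whereas the paper's argument only covers $[0,1]$.
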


\begin{proof}
First, observe that the definition of $q$-shifted factorials guarantees that $w_q(x,z)$ is finite for $0\le x,z\le 1$. Indeed, the convergence of the infinite product is equivalent to the convergence of
$$\sum_{j=0}^{\infty}\ln((1-q^{j+1}x)(1+q^{j+1}x)(1-q^{j+1}z)(1+q^{j+1}z))\le \sum_{j=0}^{\infty}2\ln(1+\max\{x,z\}q^{j+1}),$$
the last series being convergent in virtue of the ratio test. 

It holds that for all positive integer $N$ that
$$0\le \prod_{j=0}^{N}(1-q^{j+1}x)(1+q^{j+1}x)(1-q^{j+1}z)(1+q^{j+1}z)=\prod_{j=0}^{N}(1-(q^{j+1}x)^2)(1-(q^{j+1}z)^2)\le 1,$$
due to $0\le x,z\le 1$, and $0<q<1$. Observe that
$$w_q(x,z)=(qx,-qx;q)_{\infty}\,(qz,-qz;q)_{\infty}=\lim_{N\to\infty}\prod_{j=0}^{N}(1-q^{j+1}x)(1+q^{j+1}x)(1-q^{j+1}z)(1+q^{j+1}z)$$
which entails that $0\le w_{q}(x,z)\le 1$. In addition to this, for all $0\le x,z\le 1$, the quantity $w_{q}(x,z)\neq0$ since none of the factors in its definition vanishes.
\end{proof}


\section{Discrete $q$-Hermite I kernel as a Mercer kernel}

Finally, the symmetry of the kernel is straightforward, while the remaining step consists in demonstrating that it is positive semidefinite.  
According to Mercer's theorem~\cite{mercer1909xvi}, let $K(x,z)$ be a symmetric and continuous function defined on the compact domain $[a,b]\times[a,b]$. For any function $g(\cdot)$ belonging to the space of continuous functions on $[a,b]$, a sufficient condition for $K$ to be positive semidefinite is:
\[
\int_a^b \int_a^b K(x, z)\, g(x)\, g(z)\, dx\, dz \ge 0.
\]

\begin{proof}

Same as Padierna et al. \cite{padierna2018novel}, as $K_{\text{Q-HERMITE}}(\textbf{x}, \textbf{z}  )= \prod_{j=1}^d K_{\text{Q-HERMITE}}(x_j, z_j)$ the scalar version of our kernel is expressed as: $K_{\text{Q-HERMITE}}(x,z) =\sum_{i=0}^n H_i(x;q)\, H_i(z;q)\, w_q(x,z)$

Substituting the latter scalar definition of the kernel,
\[
K_{\text{Q-HERMITE}}(x,z) = \sum^n_{i=0} H_{i}(x_j;q) H_{i}(z_j;q) \cdot w_{q}(x_j,z_j),
\]
and recalling that the weight function satisfies
\[
w_{q}(\mathbf{x},\mathbf{z}) = (q\mathbf{x}, -q\mathbf{x}; q)_\infty \cdot (q\mathbf{z}, -q\mathbf{z}; q)_\infty,
\]
we obtain:
\[
\int \int\sum_{i=0}^n
H_i(x;q)\, H_i(z;q)\,
(qx, -qx; q)_\infty \,
(qz, -qz; q)_\infty \,
g(x)\, g(z).
\]

Reordering the terms, we can separate the summations over \(x\) and \(z\):
\[
= \sum_{i=0}^n 
\left( \int H_i(x;q)\, (qx, -qx; q)_\infty\, g(x) \right)
\left( \int H_i(z;q)\, (qz, -qz; q)_\infty\, g(z) \right).
\]

We can express the previous expression more compactly as:
\[
\sum_{i=0}^n 
\left( \int H_i(x;q)\, (qx, -qx; q)_\infty\, g(x) \right)^2 \ge 0.
\]

Therefore, the kernel \(K_{\text{Q-HERMITE}}\) is positive semidefinite and hence constitutes a valid Mercer kernel.

Finally, since the product of positive semidefinite kernels is also positive semidefinite, this construction can be naturally extended to higher dimensions as:
\[
K_{\text{Q-HERMITE}}(\mathbf{x},\mathbf{z}) = \prod_{j=1}^d K_{\text{Q-HERMITE}}(x_j, z_j).
\]
\end{proof}

\subsection{Computational Algorithm}

Algorithm \ref{alg:alsalam} ilustrates how we build this kernel. We compute the q-Hermite kernel by leveraging the three-term recurrence relation of the discrete $q$-Hermite I polynomials (from \ref{eq:recurrence-a=-1}) to generate orthogonal basis functions up to degree $N$ for each input dimension. This approach follows the established methodology for constructing kernels from orthogonal polynomial systems \cite{padierna2018novel, moghaddam2016new, zhou2007constructing}, where the explicit feature map is formed by evaluating the polynomial basis and applying the corresponding orthogonality weight function. The recurrence enables stable computation of polynomial values while maintaining the orthogonality properties essential for kernel validity.

The multivariate kernel is constructed as a product of univariate kernels across dimensions, consistent with separable kernel design principles common in kernel methods. For each dimension, we compute weighted feature maps by combining the polynomial evaluations with the q-Hermite weight function, then form the dimension-wise kernel via inner products. The final kernel matrix results from the product of these dimensional kernels, ensuring positive definiteness through the multiplicative composition of valid univariate kernels. This explicit feature map construction provides computational advantages for moderate-dimensional data while guaranteeing Mercer conditions through the orthogonal polynomial framework.

\begin{algorithm}[H]
\caption{Computation of the q-Hermite kernel function $K_{\text{Q-HERMITE}}(X, Z, q, a, N)$}
\label{alg:alsalam}

\textbf{Inputs:} $X, Z \in [-1,1]^d$, $q \in (0,1)$, $a =-1$, $N \in \mathbb{Z}^+$ \\
\textbf{Output:} Kernel matrix $K_{\text{Q-HERMITE}}(X, Z) \in \mathbb{R}^{n_X \times n_Z}$

\begin{algorithmic}[1]
\State \textbf{Initialize:} $(n_X, d) \gets \text{shape}(X)$, $(n_Z, \_) \gets \text{shape}(Z)$
\State $K \gets \mathbf{1}_{n_X \times n_Z}$ \Comment{Initialize kernel as an all-ones matrix}
\For{$dim = 1$ \textbf{to} $d$}
    \State Compute $H_x \gets H(X_{:,dim}; q, N)$ and $H_z \gets H(Z_{:,dim}; q, N)$ \Comment{From (\ref{eq:recurrence-a=-1})}
    \State Compute weights $w_x \gets w_{q}(X_{:,dim})$ and $w_z \gets w_{q}(Z_{:,dim})$ \Comment{From (\ref{eq:weightal})}
    \State Build feature matrices $\phi_x \gets H_x \odot w_x$ and $\phi_z \gets H_z \odot w_z$
    \State Compute dimension-wise kernel $K_{\text{dim}} \gets \phi_x \phi_z^\top$
    \State Update global kernel $K \gets K \odot K_{\text{dim}}$
\EndFor
\State \textbf{return} $K$
\end{algorithmic}
\end{algorithm}

\section{Experimental Methodology}
\label{secmet}

In this section, we evaluate the performance of our newly proposed q-hermite kernel (\ref{kalsalam}) against both classical kernels (\hyperlink{ker:linear}{$K_{\text{Linear}}$}, \hyperlink{ker:poly}{$K_{\text{Poly}}$} and \hyperlink{ker:rbf}{$K_{\text{RBF}}$})  and other orthogonal polynomial-based kernels (\hyperlink{ker:hermite}{$K_{\text{s-Herm}}$} and \hyperlink{ker:gegen}{$K_{\text{Gegen}}$}), both formally defined in \cite{padierna2018novel}. The comparison is carried out between the datasets shown in Table \ref{tab:dataset_summary} and evaluation metrics (Subsection \ref{submetrics}). Note that only 7 out of those 20 datasets overlap with Padierna's work \cite{padierna2018novel}. We use these 7 merely as an arbitrary baseline to obtain comparable reference values, while incorporating an additional 13 datasets to increase heterogeneity and sample diversity, including 7 multiclass datasets among them.

\begin{table}[htbp]
\centering
\small
\caption{Summary of the datasets used in this study}
\label{tab:dataset_summary}
\begin{tabular}{lrrrr}
\toprule
Dataset & Samples & Features & Classes & Imbalance \\
\midrule
ecoli$^{1}$        & 336  & 7  & 8  & 0.014 \\
glass$^{1}$        & 214  & 9  & 6  & 0.118 \\
SPECTF$^{1}$       & 267  & 44 & 2  & 0.259 \\
student$^{1}$      & 145  & 31 & 8  & 0.229 \\
parkinson$^{1}$    & 195  & 22 & 2  & 0.327 \\
vertebral$^{1}$    & 310  & 6  & 3  & 0.400 \\
fertility$^{1}$    & 100  & 9  & 2  & 0.136 \\
cervical$^{1}$     & 72   & 19 & 2  & 0.412 \\
mines$^{1}$        & 338  & 3  & 5  & 0.915 \\
blood$^{1}$        & 748  & 4  & 2  & 0.312 \\
statlog$^{1}$      & 845  & 18 & 4  & 0.917 \\
hepatitis$^{1}$    & 80   & 19 & 2  & 0.194 \\
wine$^{1}$         & 178  & 13 & 3  & 0.676 \\
haberman$^{1}$     & 306  & 3  & 2  & 0.360 \\
ionosphere$^{1}$   & 351  & 34 & 2  & 0.560 \\
breast$^{1}$       & 683  & 9  & 2  & 0.538 \\
australian$^{2}$   & 690  & 15 & 2  & 0.802 \\
german$^{2}$       & 1000 & 25 & 2  & 0.429 \\
heart$^{2}$        & 270  & 14 & 2  & 0.800 \\
fourclass$^{2}$    & 862  & 3  & 2  & 0.553 \\
\bottomrule
\end{tabular}

\vspace{4pt}
\caption*{\footnotesize Imbalance ratio is computed as $\text{minority class}/\text{majority class}$, both for binary and multiclass datasets.}
\vspace{3pt}
\caption*{\footnotesize $^{1}$\,UCI Machine Learning Repository: \texttt{https://archive.ics.uci.edu/ml/datasets.html}}  
\caption*{\footnotesize $^{2}$\,LIBSVM Data Repository: \texttt{https://www.csie.ntu.edu.tw/\textasciitilde cjlin/libsvmtools/datasets/binary.html}}
\end{table}

The first step in this methodology is the partitioning of each dataset into a 70-30 proportion, ensuring we have sufficient data for training and enough for validating the results, aligning with established practices in the literature. Each kernel's hyperparameters were optimized through a Bayesian optimization strategy using the Tree-structured Parzen Estimator (TPE) algorithm \cite{falkner2018bohb} (as implemented in the Optuna framework via its default TPESampler \cite{ozaki2025optunahub}), designed to iteratively explore the search space and converge toward configurations yielding the highest classification performance.


For each kernel–dataset pair, a total of 100 optimization trials were conducted, each corresponding to a distinct set of hyperparameters. The quality of a configuration was quantified by the mean classification accuracy obtained by training an SVM and evaluating it using a 10-fold, stratified cross-validation scheme that ensured balanced representation of all classes across folds. All datasets were scaled to the interval $[-1, 1]$ to satisfy the domain requirements of orthogonal polynomial kernels and to prevent bias toward features with larger magnitudes. Scaling was performed within the cross-validation loop, with the Min–Max parameters fitted on the training folds and applied to the corresponding validation folds to avoid data leakage.

To guarantee statistical robustness and mitigate the influence of random data partitioning, the complete partitioning, optimization and evaluation procedure was repeated 35 times per dataset. A different random seed was used for each repetition, generating an independent partition and 10-fold cross-validation split. Consistent conditions were kept for all kernels and hyperparameter configurations within the same repetition by sharing the same fold assignments.


The final performance of each kernel was then reported as the average across these 35 independent repetitions. This repeated approach with Bayesian optimization follows recommended methodologies for avoiding overfitting in hyperparameter \cite{bergstra2011algorithms, feurer2019hyperparameter}, enhancing the methodology proposed by Padierna et al. \cite{padierna2018novel}.

The hyperparameter search spaces were carefully designed based on established practices in SVM literature and preliminary experimentation. The regularization parameter $C$ was bounded between 0.001 and 100 \cite{hastie2004entire}. For polynomial-based kernels including the proposed q-hermite kernel, the degree parameter $n$ was limited to integers from 1 to 6, as higher degrees often induce overfitting without substantial performance gains \cite{ali2007optimal}. The RBF kernel parameter $\gamma$ was explored in the range $[2^{-6}, 2^{2}]$, matching the search space of the polynomial kernel scale parameter $\gamma$, while the offset term (coef0) was fixed to 1. The $q$-deformation parameter for the q-Hermite kernel was constrained to $(0,1)$ (specifically $[0.01, 0.99]$ to avoid numerical instability at the boundaries). Similarly, the Gegenbauer parameter $\alpha$ was bounded to $(-0.5,1.5]$ (specifically $[-0.49, 1.5]$ to avoid the boundary $-0.5$).

All experiments were conducted on a high-performance server hosting two AMD ROME 7552 processors (48 cores and 96 threads each). The system integrates 640 GB of DDR4 ECC RAM and a 2 TB SSD used for high-speed data access. For accelerated computation, the platform includes six NVIDIA Quadro RTX 5000 GPUs (16 GB each). The infrastructure also incorporates a 9 kVA UPS unit to ensure electrical stability and enable safe system shutdown during power outages. The implementation utilized Python 3.9 with the following specific library versions: NumPy 1.23.0 for numerical operations, pandas 1.5.0 for data manipulation, SciPy 1.9.0 for statistical computations and special functions, scikit-learn 1.3.0 for SVM implementation and evaluation metrics, Optuna 3.0.0 for Bayesian hyperparameter optimization, and ucimlrepo 0.0.7 for dataset loading from the UCI Machine Learning Repository (Table \ref{tab:dataset_summary}). To support the reader in setting up test environments, if they wish to replicate results, the software provide includes the set of dependencies necessary to automate this process. For each experimental trial, computational time and all necessary metrics are saved to CSV files. The complete experimental methodology is summarized in Algorithm~\ref{alg:pipeline}.

\begin{algorithm}[htbp]
\caption{Experimental pipeline}
\label{alg:pipeline}
\textbf{Inputs:} dataset collection $\mathcal{D}$, kernel set $\mathcal{K}$, 
$T$ experimental trials, $N$ optimization trials.\\
\textbf{Output:} aggregated performance summary $\mathcal{S}$
\begin{algorithmic}[1]
\State \textbf{Initialize:} $\mathcal{S}\gets\varnothing$
\State $(\forall D\in\mathcal{D})\ (X,y)\gets\textsc{Load\_Data}(D)$
\ForAll{dataset $D\in\mathcal{D}$}
  \ForAll{kernel $k\in\mathcal{K}$}
    \State $\mathcal{M}\gets\varnothing$
    \For{$t\gets 1$ \textbf{to} $T$} \Comment{$T = 35$}
      \State $(X_s,X_e,y_s,y_e)\gets\textsc{StratifiedSplit}(X,y,0.7,\text{seed}=t)$
      \State $\theta^\star\gets\textsc{BayesianOptimize}(k,X_s,y_s,N,\text{seed}=t)$ \Comment{$N = 100$}
      \State $m^\star\gets\textsc{TrainEvalSVM}(k,X_s,y_s,X_e,y_e,\theta^\star)$
      \State $\mathcal{M}\gets\mathcal{M} \cup \{m^\star\}$
    \EndFor
    \State $\bar{m}\gets\textsc{Aggregate}(\mathcal{M})$
    \State $\mathcal{S}\gets\mathcal{S}\cup\{(D,k,\bar{m})\}$
  \EndFor
\EndFor
\State \textbf{return} $\mathcal{S}$
\end{algorithmic}
\end{algorithm}

\begin{algorithm}[htbp]
\caption{Hyperparameter optimization with fixed folds (BayesianOptimize function)}
\label{alg:bayesian-minimal}
\textbf{Input:} kernel $k$, data $(X_s,y_s)$, trials $N$, seed $t$ \\
\textbf{Output:} optimal hyperparameters $\theta^\star$
\begin{algorithmic}[1]
\State $\mathcal{F} \gets \textsc{MakeFolds}(X_s, y_s, F=10, \text{seed}=t)$
\For{$i \gets 1$ \textbf{to} $N$}
    \State $\theta_i \gets \textsc{SampleHyperparams}()$
    \State $a_i \gets \textsc{CrossValidate}(k, \theta_i, \mathcal{F})$
\EndFor
\State $\theta^\star \gets \arg\max_i a_i$
\State \textbf{return} $\theta^\star$
\end{algorithmic}
\end{algorithm}

\subsection{Evaluation Metrics}
\label{submetrics}

We evaluated model performance using multiple metrics to comprehensively assess classification effectiveness and computational efficiency. Accuracy was employed as the primary performance measure, representing the proportion of correctly classified instances, accompanied by its standard deviation to quantify variability across experimental trials. The F1-score provided additional insight by offering a balanced perspective on classification performance, particularly valuable for imbalanced datasets. Model complexity was assessed through the proportion of support vectors (PSV) relative to set size. Computational requirements were quantified by measuring training time. For all metrics, we reported comprehensive statistics including mean, standard deviation, median, and 95\% confidence intervals based on multiple experimental trials to ensure robust statistical characterization of each kernel's performance characteristics. These metrics are formally defined in Table \ref{tab:metrics}.

\begin{table}[htbp]
\centering
\scriptsize
\setlength{\tabcolsep}{3pt}
\caption{Formal definition of evaluation metrics and aggregation methods}
\label{tab:metrics}
\begin{tabular}{p{0.25\textwidth}p{0.35\textwidth}p{0.33\textwidth}}
\toprule
\textbf{Metric} & \textbf{Per-Fold Computation} & \textbf{Cross-Trial Statistics} \\
\midrule

Accuracy\textsuperscript{1} &
$\displaystyle
\text{Acc}^{(k)} = \frac{1}{N_{\text{test}}^{(k)}} \sum_{i=1}^{N_{\text{test}}^{(k)}} 1(y_i^{(k)} = \hat{y}_i^{(k)})
$ &
$\displaystyle \mu_{\text{Acc}} = \frac{1}{T}\sum_{t=1}^{T} \frac{1}{K}\sum_{k=1}^{K} \text{Acc}^{(k,t)}$
\\

F1-Score \textsuperscript{2} &
$\displaystyle
\mathrm{F1}^{(k)} 
=
\frac{2\,\mathrm{TP}^{(k)}}{2\,\mathrm{TP}^{(k)} + \mathrm{FP}^{(k)} + \mathrm{FN}^{(k)}}
$
&
$\displaystyle 
\mu_{\mathrm{F1}}
=
\frac{1}{T}
\sum_{t=1}^{T}
\frac{1}{K}
\sum_{k=1}^{K}
\mathrm{F1}^{(k,t)}
$
\\

PSV\textsuperscript{3} &
$\displaystyle
\text{PSV}^{(k)} = \frac{\sum_{c=1}^{C} n_{\text{SV},c}^{(k)}}{N_{\text{total}}}
$ &
$\displaystyle \mu_{\text{PSV}} = \frac{1}{T}\sum_{t=1}^{T} \frac{1}{K}\sum_{k=1}^{K} \text{PSV}^{(k,t)}$
\\

Training Time\textsuperscript{4} &
$\displaystyle
t_{\text{train}}^{(k)} = t_{\text{SVM}}^{(k)}
$ &
$\displaystyle \mu_t = \frac{1}{T}\sum_{t=1}^{T} \frac{1}{K}\sum_{k=1}^{K} t_{\text{train}}^{(k,t)}$
\\
\bottomrule
\end{tabular}

\vspace{4pt}
\raggedright
\tiny
\textsuperscript{1}Accuracy: \url{https://scikit-learn.org/stable/modules/generated/sklearn.metrics.accuracy_score.html}\\
\textsuperscript{2}F1-Score: \url{https://scikit-learn.org/stable/modules/generated/sklearn.metrics.f1_score.html}\\
\textsuperscript{3}Support vectors: \url{https://scikit-learn.org/stable/modules/generated/sklearn.svm.SVC.html#sklearn.svm.SVC.n_support_}\\
\textsuperscript{4}Training time: \url{https://scikit-learn.org/stable/modules/generated/sklearn.svm.SVC.html#sklearn.svm.SVC.fit}

\vspace{4pt}
\raggedright
\tiny
\end{table}

where $K = 10$: cross-validation folds per trial, $T = 35$: experimental trials, $N_{\text{test}}^{(k)}$: test set size in fold $k$, $y_i^{(k)}$, $\hat{y}_i^{(k)}$: true and predicted labels, $n_{\text{SV},c}^{(k)}$: support vectors for class $c$ in fold $k$, $C$: number of classes, $N_{\text{total}}$: total dataset size, $\mathrm{TP}^{(k)}$: true positives in fold $k$, $\mathrm{FP}^{(k)}$: false positives in fold $k$, $\mathrm{FN}^{(k)}$: false negatives in fold $k$. Final reported statistics include mean ($\mu$), standard deviation ($\sigma$), median, and 95\% confidence intervals across the $T = 35$ trials.

\section{Experimental results and discussion}
\label{secexp}

The tables \ref{tab:accuracy_results}, \ref{tab:f1_results}, 
\ref{tab:training_time_results} and \ref{tab:support_vector_results}
summarize the main outcomes of the methodology described above, covering the four primary evaluation metrics.

Regarding accuracy (Table \ref{tab:accuracy_results}), we observe that the Gegenbauer kernels consistently remain at the top, whereas our q-hermite kernel typically occupies a top-middle position in the ranking (ranking third overall). The F1-score follows the same  trend (Table \ref{tab:f1_results}), showing a similar relative ordering across kernels. We observe that the proposed kernel $K_{qH}$ outperforms the Gegenbauer kernel in terms of accuracy on 9 out of the 20 datasets, i.e., in nearly half of the cases. As shown in Figure~\ref{fig:comparison}, this corresponds to an average accuracy discrepancy of -1.373\%. Furthermore, no statistically significant correlation is found between this performance difference and either the number of samples or the number of features in the datasets, as the corresponding p-values are well above conventional significance thresholds.

\begin{figure}[htbp]
    \centering
    
    \begin{subfigure}[b]{\textwidth}
        \centering
        \includegraphics[width=0.95\textwidth]{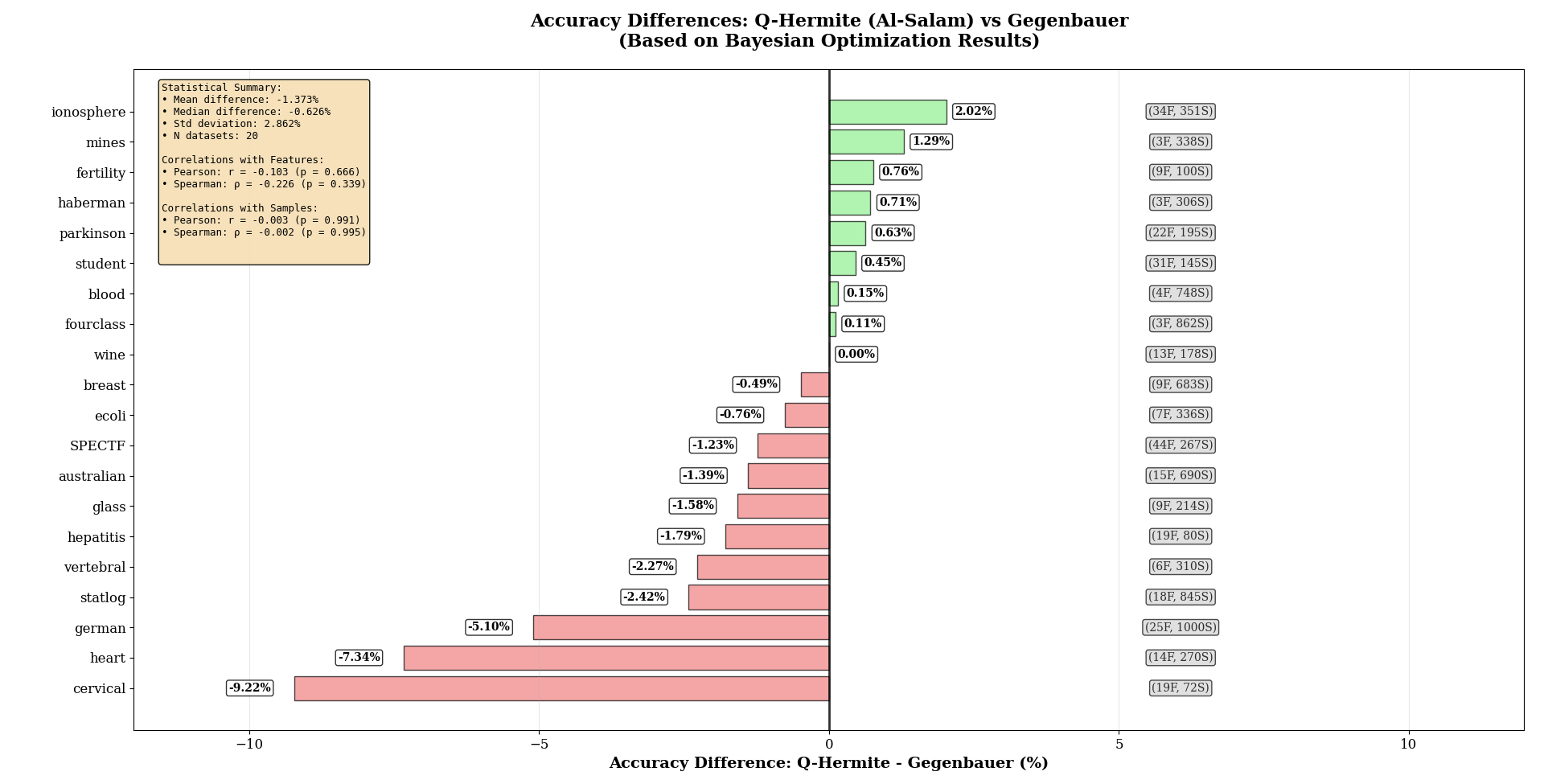}
        \caption{Accuracy comparison between Q-Hermite and Gegenbauer kernels across different datasets. Green bars indicate Q-Hermite advantage, red bars indicate Gegenbauer advantage.}
        \label{fig:bars}
    \end{subfigure}
    
    \vspace{0.5cm} 
    
    \begin{subfigure}[b]{\textwidth}
        \centering
        \includegraphics[width=0.75\textwidth]{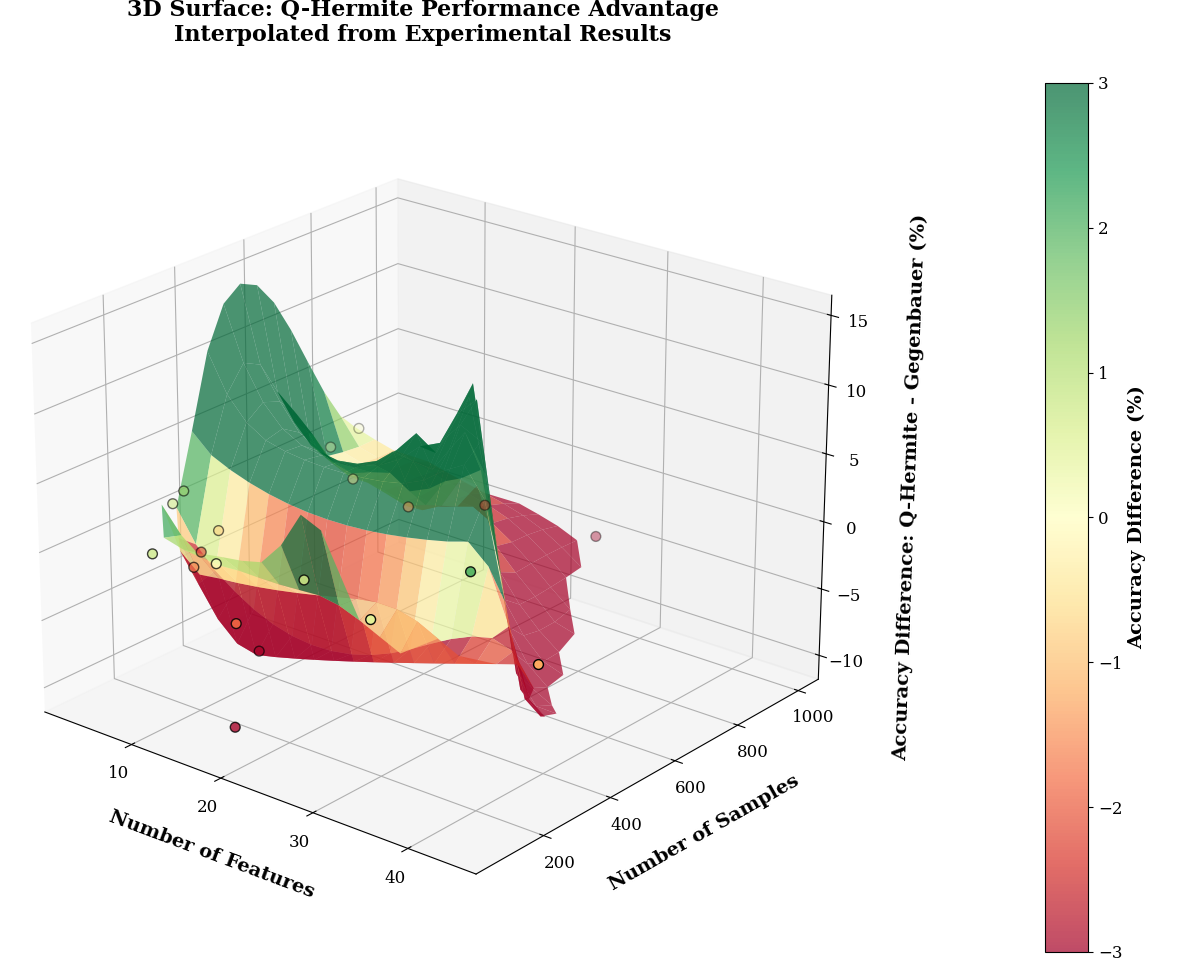}
        \caption{Interpolated 3D surface showing Q-Hermite advantage as a function of the number of features and samples.}
        \label{fig:3d}
    \end{subfigure}
    
    \caption{Comparative analysis of Q-Hermite and Gegenbauer kernel performance across multiple datasets. (a) Percentage accuracy differences between the two kernels. (b) 3D representation of Q-Hermite advantage interpolated from experimental results.}
    \label{fig:comparison}
\end{figure}

For training time (Table \ref{tab:training_time_results}), we obtain a substantial advantage over the Gegenbauer family, achieving better performance in 16/20 datasets. This improvement arises not only from the convenient absence of a scaling function, but also from the ability to modulate the resolution of our weight function by adjusting the truncation bound of the Pochhammer operator. For instance, the weight function $w_{-1,q}(x,z)=(qx,-qx;q)_{\infty}\allowbreak(qz,-qz;q)_{\infty}$ can be replaced with its truncated version $w_{-1,q}(x,z)=(qx,-qx;q)_{\vartheta}\allowbreak(qz,-qz;q)_{\vartheta}$. Although the training time does not match that of the simplest classical kernels, it represents a clear improvement over the orthogonal polynomial paradigm, specifically over the Gegenbauer kernel.

\begin{figure}[htbp]
    \centering
    \includegraphics[width=1\textwidth]{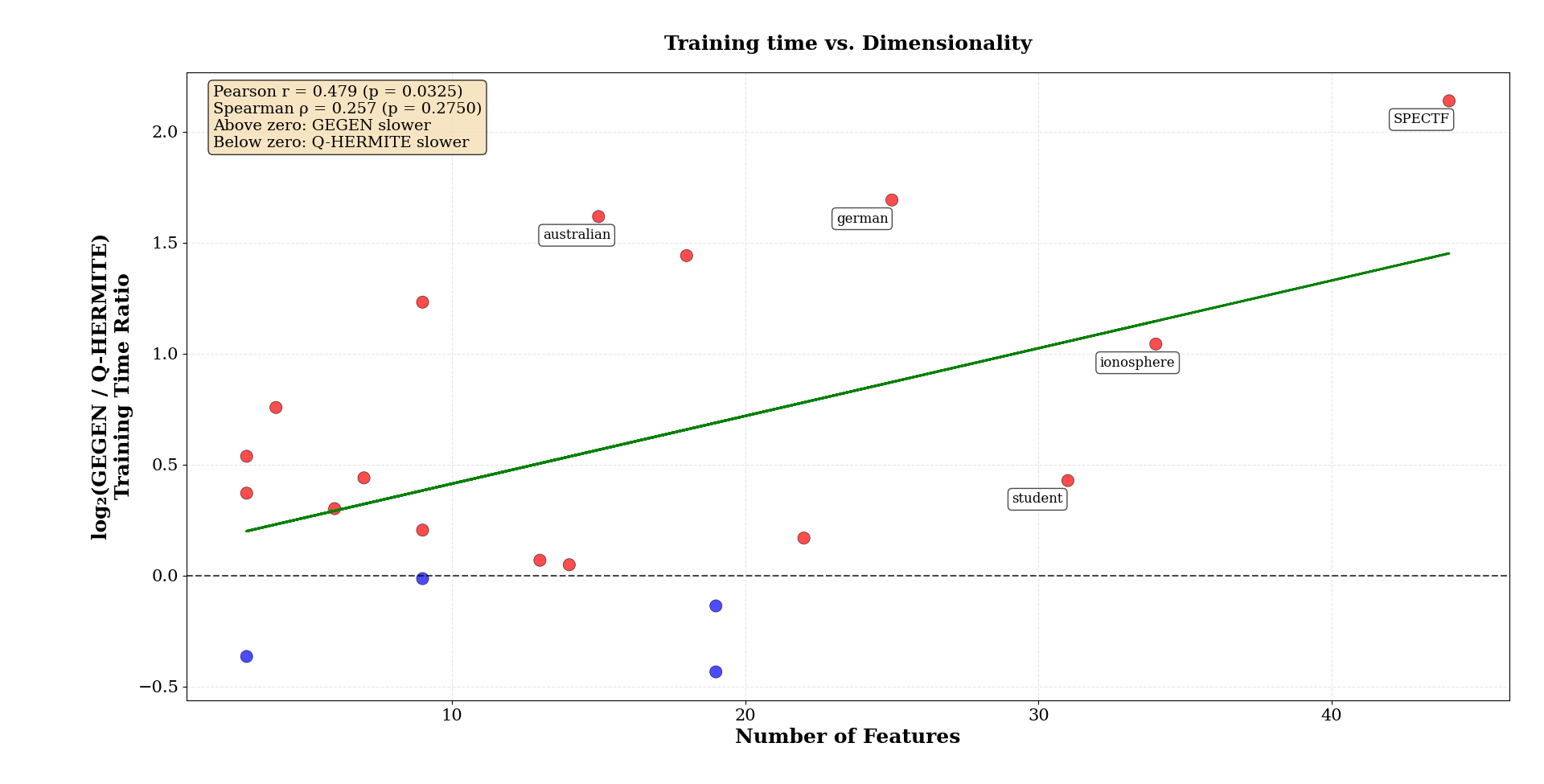}
    \caption{Training-time ratio $ \log_{2}(\text{GEGEN}/\text{Q-HERMITE}) $ across 20 UCI datasets as a function of feature dimensionality. Positive values (red dots) indicate a speed advantage for $K_{qH}$ for that dataset (while blue dots represent datasets where $K_{Gegen}$ is faster). A linear green fit reveals a significant correlation ($r=0.479$, $p=0.0325$) between computational gains with dimensionality.}
    \label{fig:NF}
\end{figure}

Across the benchmark, the q-Hermite kernel displays a clear separation between its computational and accuracy behavior. Figure \ref{fig:NF} shows that its training-time advantage grows systematically with dataset dimensionality, exhibiting a significant correlation ($r=0.479$, $p=0.0325$), which indicates that the kernel becomes increasingly efficient in high-dimensional regimes. In contrast, Figure \ref{fig:bars} reveals that accuracy differences relative to the Gegenbauer kernel remain small (mean: $-1.373\%$, median: $-0.626\%$) and do not correlate with dimensionality (nor number of samples), suggesting no systematic accuracy degradation as feature count increases. Overall, the results indicate that $K_{qH}$ offers substantial computational savings, particularly in high-dimensional settings, while maintaining reasonable accuracy levels.

Finally, the support vector proportion (Table \ref{tab:support_vector_results}) yields an advantage for classical algorithms, partially contradicting the results reported by Padierna et al \cite{padierna2018novel}. In our case, this metric reveals an interesting dissociation between model complexity and predictive performance for our Q-Hermite kernel (and even with the Gegenbauer kernel). Despite exhibiting among the highest support vector proportions in some cases, its optimal predictive performance does not require support vector minimization. For instance, on datasets like Ionosphere and Fertility, Q-Hermite produces among the poorest support vector counts while delivering superior accuracy and F1-score compared to all other kernels.

\begin{table}[htbp]
\centering
\caption{Statistics on kernel performance according to classification accuracy}
\label{tab:accuracy_results}
\resizebox{\textwidth}{!}{%
\begin{tabular}{lcccccc}
\toprule
\textbf{Kernel} & \textbf{Q-HERMITE} & \textbf{LINEAR} & \textbf{POLY} & \textbf{RBF} & \textbf{HERMITE} & \textbf{GEGEN} \\
\midrule
\textbf{Dataset} & Mean $\mid$ Std & Mean $\mid$ Std & Mean $\mid$ Std & Mean $\mid$ Std & Mean $\mid$ Std & Mean $\mid$ Std \\
\midrule
SPECTF & \begin{tabular}{@{}c@{}}77.57 $\pm$ 2.44\\5\end{tabular} & \begin{tabular}{@{}c@{}}77.50 $\pm$ 3.71\\6\end{tabular} & \begin{tabular}{@{}c@{}}78.06 $\pm$ 2.82\\3\end{tabular} & \begin{tabular}{@{}c@{}}77.74 $\pm$ 2.51\\4\end{tabular} & \begin{tabular}{@{}c@{}}\textbf{79.01 $\pm$ 0.00}\\1\end{tabular} & \begin{tabular}{@{}c@{}}78.80 $\pm$ 3.70\\2\end{tabular} \\
\hline
australian & \begin{tabular}{@{}c@{}}84.46 $\pm$ 1.82\\6\end{tabular} & \begin{tabular}{@{}c@{}}85.53 $\pm$ 1.94\\4\end{tabular} & \begin{tabular}{@{}c@{}}85.69 $\pm$ 1.75\\3\end{tabular} & \begin{tabular}{@{}c@{}}85.73 $\pm$ 2.08\\2\end{tabular} & \begin{tabular}{@{}c@{}}85.34 $\pm$ 1.40\\5\end{tabular} & \begin{tabular}{@{}c@{}}\textbf{85.85 $\pm$ 1.86}\\1\end{tabular} \\
\hline
blood & \begin{tabular}{@{}c@{}}78.29 $\pm$ 1.29\\2\end{tabular} & \begin{tabular}{@{}c@{}}76.15 $\pm$ 0.34\\6\end{tabular} & \begin{tabular}{@{}c@{}}78.04 $\pm$ 1.47\\4\end{tabular} & \begin{tabular}{@{}c@{}}\textbf{78.32 $\pm$ 1.60}\\1\end{tabular} & \begin{tabular}{@{}c@{}}76.93 $\pm$ 0.80\\5\end{tabular} & \begin{tabular}{@{}c@{}}78.13 $\pm$ 1.55\\3\end{tabular} \\
\hline
breast & \begin{tabular}{@{}c@{}}96.63 $\pm$ 1.10\\5\end{tabular} & \begin{tabular}{@{}c@{}}96.81 $\pm$ 0.76\\4\end{tabular} & \begin{tabular}{@{}c@{}}96.88 $\pm$ 1.13\\3\end{tabular} & \begin{tabular}{@{}c@{}}96.46 $\pm$ 0.91\\6\end{tabular} & \begin{tabular}{@{}c@{}}97.02 $\pm$ 0.95\\2\end{tabular} & \begin{tabular}{@{}c@{}}\textbf{97.11 $\pm$ 0.88}\\1\end{tabular} \\
\hline
cervical & \begin{tabular}{@{}c@{}}80.00 $\pm$ 6.17\\5\end{tabular} & \begin{tabular}{@{}c@{}}\textbf{91.95 $\pm$ 5.65}\\1\end{tabular} & \begin{tabular}{@{}c@{}}90.78 $\pm$ 5.38\\2\end{tabular} & \begin{tabular}{@{}c@{}}90.00 $\pm$ 5.07\\3\end{tabular} & \begin{tabular}{@{}c@{}}72.08 $\pm$ 1.93\\6\end{tabular} & \begin{tabular}{@{}c@{}}89.22 $\pm$ 5.76\\4\end{tabular} \\
\hline
ecoli & \begin{tabular}{@{}c@{}}85.32 $\pm$ 3.00\\6\end{tabular} & \begin{tabular}{@{}c@{}}85.80 $\pm$ 2.90\\3\end{tabular} & \begin{tabular}{@{}c@{}}86.05 $\pm$ 2.81\\2\end{tabular} & \begin{tabular}{@{}c@{}}85.60 $\pm$ 2.79\\5\end{tabular} & \begin{tabular}{@{}c@{}}85.71 $\pm$ 2.95\\4\end{tabular} & \begin{tabular}{@{}c@{}}\textbf{86.08 $\pm$ 2.74}\\1\end{tabular} \\
\hline
fertility & \begin{tabular}{@{}c@{}}\textbf{86.38 $\pm$ 2.01}\\1\end{tabular} & \begin{tabular}{@{}c@{}}85.24 $\pm$ 3.76\\5\end{tabular} & \begin{tabular}{@{}c@{}}84.67 $\pm$ 4.99\\6\end{tabular} & \begin{tabular}{@{}c@{}}85.33 $\pm$ 2.14\\4\end{tabular} & \begin{tabular}{@{}c@{}}86.00 $\pm$ 2.08\\2\end{tabular} & \begin{tabular}{@{}c@{}}85.62 $\pm$ 2.22\\3\end{tabular} \\
\hline
fourclass & \begin{tabular}{@{}c@{}}99.74 $\pm$ 0.33\\2\end{tabular} & \begin{tabular}{@{}c@{}}76.36 $\pm$ 2.67\\6\end{tabular} & \begin{tabular}{@{}c@{}}99.44 $\pm$ 0.45\\4\end{tabular} & \begin{tabular}{@{}c@{}}\textbf{99.93 $\pm$ 0.15}\\1\end{tabular} & \begin{tabular}{@{}c@{}}87.94 $\pm$ 1.97\\5\end{tabular} & \begin{tabular}{@{}c@{}}99.62 $\pm$ 0.35\\3\end{tabular} \\
\hline
german & \begin{tabular}{@{}c@{}}70.39 $\pm$ 1.30\\5\end{tabular} & \begin{tabular}{@{}c@{}}75.50 $\pm$ 2.03\\2\end{tabular} & \begin{tabular}{@{}c@{}}\textbf{75.53 $\pm$ 1.85}\\1\end{tabular} & \begin{tabular}{@{}c@{}}73.69 $\pm$ 2.69\\4\end{tabular} & \begin{tabular}{@{}c@{}}70.00 $\pm$ 0.00\\6\end{tabular} & \begin{tabular}{@{}c@{}}75.49 $\pm$ 1.59\\3\end{tabular} \\
\hline
glass & \begin{tabular}{@{}c@{}}69.36 $\pm$ 4.07\\3\end{tabular} & \begin{tabular}{@{}c@{}}65.41 $\pm$ 4.10\\6\end{tabular} & \begin{tabular}{@{}c@{}}70.37 $\pm$ 4.52\\2\end{tabular} & \begin{tabular}{@{}c@{}}68.66 $\pm$ 4.15\\5\end{tabular} & \begin{tabular}{@{}c@{}}68.92 $\pm$ 4.46\\4\end{tabular} & \begin{tabular}{@{}c@{}}\textbf{70.95 $\pm$ 4.42}\\1\end{tabular} \\
\hline
haberman & \begin{tabular}{@{}c@{}}72.58 $\pm$ 2.59\\4\end{tabular} & \begin{tabular}{@{}c@{}}73.04 $\pm$ 1.60\\3\end{tabular} & \begin{tabular}{@{}c@{}}73.17 $\pm$ 2.28\\2\end{tabular} & \begin{tabular}{@{}c@{}}72.45 $\pm$ 2.35\\5\end{tabular} & \begin{tabular}{@{}c@{}}\textbf{73.32 $\pm$ 2.52}\\1\end{tabular} & \begin{tabular}{@{}c@{}}71.86 $\pm$ 3.14\\6\end{tabular} \\
\hline
heart & \begin{tabular}{@{}c@{}}76.75 $\pm$ 3.84\\6\end{tabular} & \begin{tabular}{@{}c@{}}83.49 $\pm$ 3.18\\2\end{tabular} & \begin{tabular}{@{}c@{}}83.28 $\pm$ 3.69\\3\end{tabular} & \begin{tabular}{@{}c@{}}82.33 $\pm$ 3.37\\4\end{tabular} & \begin{tabular}{@{}c@{}}80.81 $\pm$ 3.68\\5\end{tabular} & \begin{tabular}{@{}c@{}}\textbf{84.09 $\pm$ 3.51}\\1\end{tabular} \\
\hline
hepatitis & \begin{tabular}{@{}c@{}}83.33 $\pm$ 0.00\\3\end{tabular} & \begin{tabular}{@{}c@{}}83.10 $\pm$ 5.54\\6\end{tabular} & \begin{tabular}{@{}c@{}}84.64 $\pm$ 4.96\\2\end{tabular} & \begin{tabular}{@{}c@{}}83.21 $\pm$ 4.40\\5\end{tabular} & \begin{tabular}{@{}c@{}}83.33 $\pm$ 0.00\\4\end{tabular} & \begin{tabular}{@{}c@{}}\textbf{85.12 $\pm$ 5.48}\\1\end{tabular} \\
\hline
ionosphere & \begin{tabular}{@{}c@{}}\textbf{94.37 $\pm$ 2.01}\\1\end{tabular} & \begin{tabular}{@{}c@{}}87.25 $\pm$ 2.73\\5\end{tabular} & \begin{tabular}{@{}c@{}}89.76 $\pm$ 2.85\\4\end{tabular} & \begin{tabular}{@{}c@{}}93.96 $\pm$ 1.97\\2\end{tabular} & \begin{tabular}{@{}c@{}}68.17 $\pm$ 2.82\\6\end{tabular} & \begin{tabular}{@{}c@{}}92.35 $\pm$ 2.09\\3\end{tabular} \\
\hline
mines & \begin{tabular}{@{}c@{}}\textbf{60.76 $\pm$ 4.01}\\1\end{tabular} & \begin{tabular}{@{}c@{}}51.51 $\pm$ 2.48\\6\end{tabular} & \begin{tabular}{@{}c@{}}60.20 $\pm$ 4.30\\2\end{tabular} & \begin{tabular}{@{}c@{}}59.92 $\pm$ 4.02\\3\end{tabular} & \begin{tabular}{@{}c@{}}53.28 $\pm$ 3.16\\5\end{tabular} & \begin{tabular}{@{}c@{}}59.47 $\pm$ 4.28\\4\end{tabular} \\
\hline
parkinson & \begin{tabular}{@{}c@{}}\textbf{92.83 $\pm$ 2.53}\\1\end{tabular} & \begin{tabular}{@{}c@{}}84.99 $\pm$ 3.95\\5\end{tabular} & \begin{tabular}{@{}c@{}}88.43 $\pm$ 3.77\\4\end{tabular} & \begin{tabular}{@{}c@{}}92.69 $\pm$ 3.31\\2\end{tabular} & \begin{tabular}{@{}c@{}}77.63 $\pm$ 2.68\\6\end{tabular} & \begin{tabular}{@{}c@{}}92.20 $\pm$ 2.75\\3\end{tabular} \\
\hline
statlog & \begin{tabular}{@{}c@{}}80.89 $\pm$ 2.38\\4\end{tabular} & \begin{tabular}{@{}c@{}}79.76 $\pm$ 2.57\\5\end{tabular} & \begin{tabular}{@{}c@{}}\textbf{83.73 $\pm$ 2.46}\\1\end{tabular} & \begin{tabular}{@{}c@{}}83.22 $\pm$ 2.30\\3\end{tabular} & \begin{tabular}{@{}c@{}}79.17 $\pm$ 2.42\\6\end{tabular} & \begin{tabular}{@{}c@{}}83.31 $\pm$ 2.07\\2\end{tabular} \\
\hline
student & \begin{tabular}{@{}c@{}}\textbf{25.71 $\pm$ 1.61}\\1\end{tabular} & \begin{tabular}{@{}c@{}}24.55 $\pm$ 4.82\\4\end{tabular} & \begin{tabular}{@{}c@{}}24.48 $\pm$ 4.89\\5\end{tabular} & \begin{tabular}{@{}c@{}}23.70 $\pm$ 3.27\\6\end{tabular} & \begin{tabular}{@{}c@{}}25.00 $\pm$ 0.00\\3\end{tabular} & \begin{tabular}{@{}c@{}}25.26 $\pm$ 5.15\\2\end{tabular} \\
\hline
vertebral & \begin{tabular}{@{}c@{}}82.33 $\pm$ 4.05\\6\end{tabular} & \begin{tabular}{@{}c@{}}\textbf{85.87 $\pm$ 3.48}\\1\end{tabular} & \begin{tabular}{@{}c@{}}85.28 $\pm$ 3.20\\2\end{tabular} & \begin{tabular}{@{}c@{}}83.75 $\pm$ 3.42\\4\end{tabular} & \begin{tabular}{@{}c@{}}83.72 $\pm$ 3.99\\5\end{tabular} & \begin{tabular}{@{}c@{}}84.61 $\pm$ 3.17\\3\end{tabular} \\
\hline
wine & \begin{tabular}{@{}c@{}}\textbf{98.68 $\pm$ 1.75}\\1\end{tabular} & \begin{tabular}{@{}c@{}}96.83 $\pm$ 2.41\\6\end{tabular} & \begin{tabular}{@{}c@{}}97.04 $\pm$ 2.47\\5\end{tabular} & \begin{tabular}{@{}c@{}}97.94 $\pm$ 1.76\\3\end{tabular} & \begin{tabular}{@{}c@{}}97.20 $\pm$ 2.19\\4\end{tabular} & \begin{tabular}{@{}c@{}}98.68 $\pm$ 1.57\\2\end{tabular} \\
\hline
\textbf{Avg. Rank} & 
3.45 & 4.30 & 3.00 & 3.60 & 4.25 & \textbf{2.40} \\
\bottomrule
\end{tabular}%
}
\end{table}

\begin{table}[htbp]
\centering
\caption{Statistics on kernel performance according to F1-score}
\label{tab:f1_results}
\resizebox{\textwidth}{!}{%
\begin{tabular}{lcccccc}
\toprule
\textbf{Kernel} & \textbf{Q-HERMITE} & \textbf{LINEAR} & \textbf{POLY} & \textbf{RBF} & \textbf{HERMITE} & \textbf{GEGEN} \\
\midrule
\textbf{Dataset} & Mean $\mid$ Std & Mean $\mid$ Std & Mean $\mid$ Std & Mean $\mid$ Std & Mean $\mid$ Std & Mean $\mid$ Std \\
\midrule
SPECTF & \begin{tabular}{@{}c@{}}73.62 $\pm$ 3.66\\4\end{tabular} & \begin{tabular}{@{}c@{}}74.84 $\pm$ 4.22\\2\end{tabular} & \begin{tabular}{@{}c@{}}74.69 $\pm$ 3.99\\3\end{tabular} & \begin{tabular}{@{}c@{}}73.61 $\pm$ 3.92\\5\end{tabular} & \begin{tabular}{@{}c@{}}69.75 $\pm$ 0.00\\6\end{tabular} & \begin{tabular}{@{}c@{}}\textbf{76.54 $\pm$ 3.63}\\1\end{tabular} \\
\hline
australian & \begin{tabular}{@{}c@{}}84.41 $\pm$ 1.85\\6\end{tabular} & \begin{tabular}{@{}c@{}}85.56 $\pm$ 1.94\\4\end{tabular} & \begin{tabular}{@{}c@{}}85.70 $\pm$ 1.76\\3\end{tabular} & \begin{tabular}{@{}c@{}}85.75 $\pm$ 2.08\\2\end{tabular} & \begin{tabular}{@{}c@{}}85.31 $\pm$ 1.41\\5\end{tabular} & \begin{tabular}{@{}c@{}}\textbf{85.86 $\pm$ 1.86}\\1\end{tabular} \\
\hline
blood & \begin{tabular}{@{}c@{}}74.67 $\pm$ 2.64\\3\end{tabular} & \begin{tabular}{@{}c@{}}66.10 $\pm$ 0.88\\6\end{tabular} & \begin{tabular}{@{}c@{}}73.67 $\pm$ 2.91\\4\end{tabular} & \begin{tabular}{@{}c@{}}\textbf{75.67 $\pm$ 1.69}\\1\end{tabular} & \begin{tabular}{@{}c@{}}69.29 $\pm$ 1.94\\5\end{tabular} & \begin{tabular}{@{}c@{}}75.12 $\pm$ 1.92\\2\end{tabular} \\
\hline
breast & \begin{tabular}{@{}c@{}}96.63 $\pm$ 1.09\\5\end{tabular} & \begin{tabular}{@{}c@{}}96.81 $\pm$ 0.76\\4\end{tabular} & \begin{tabular}{@{}c@{}}96.88 $\pm$ 1.13\\3\end{tabular} & \begin{tabular}{@{}c@{}}96.47 $\pm$ 0.90\\6\end{tabular} & \begin{tabular}{@{}c@{}}97.01 $\pm$ 0.96\\2\end{tabular} & \begin{tabular}{@{}c@{}}\textbf{97.12 $\pm$ 0.88}\\1\end{tabular} \\
\hline
cervical & \begin{tabular}{@{}c@{}}75.93 $\pm$ 9.36\\5\end{tabular} & \begin{tabular}{@{}c@{}}\textbf{91.80 $\pm$ 5.87}\\1\end{tabular} & \begin{tabular}{@{}c@{}}90.70 $\pm$ 5.35\\2\end{tabular} & \begin{tabular}{@{}c@{}}90.02 $\pm$ 4.92\\3\end{tabular} & \begin{tabular}{@{}c@{}}61.12 $\pm$ 1.85\\6\end{tabular} & \begin{tabular}{@{}c@{}}89.12 $\pm$ 5.72\\4\end{tabular} \\
\hline
ecoli & \begin{tabular}{@{}c@{}}84.54 $\pm$ 2.98\\6\end{tabular} & \begin{tabular}{@{}c@{}}85.07 $\pm$ 2.85\\3\end{tabular} & \begin{tabular}{@{}c@{}}\textbf{85.27 $\pm$ 2.80}\\1\end{tabular} & \begin{tabular}{@{}c@{}}84.74 $\pm$ 2.83\\5\end{tabular} & \begin{tabular}{@{}c@{}}84.98 $\pm$ 2.90\\4\end{tabular} & \begin{tabular}{@{}c@{}}85.16 $\pm$ 2.78\\2\end{tabular} \\
\hline
fertility & \begin{tabular}{@{}c@{}}\textbf{81.09 $\pm$ 2.26}\\1\end{tabular} & \begin{tabular}{@{}c@{}}79.80 $\pm$ 1.97\\5\end{tabular} & \begin{tabular}{@{}c@{}}79.72 $\pm$ 2.97\\6\end{tabular} & \begin{tabular}{@{}c@{}}80.31 $\pm$ 1.46\\3\end{tabular} & \begin{tabular}{@{}c@{}}80.82 $\pm$ 1.86\\2\end{tabular} & \begin{tabular}{@{}c@{}}80.16 $\pm$ 1.43\\4\end{tabular} \\
\hline
fourclass & \begin{tabular}{@{}c@{}}99.74 $\pm$ 0.33\\2\end{tabular} & \begin{tabular}{@{}c@{}}74.87 $\pm$ 4.57\\6\end{tabular} & \begin{tabular}{@{}c@{}}99.44 $\pm$ 0.44\\4\end{tabular} & \begin{tabular}{@{}c@{}}\textbf{99.93 $\pm$ 0.15}\\1\end{tabular} & \begin{tabular}{@{}c@{}}87.86 $\pm$ 2.02\\5\end{tabular} & \begin{tabular}{@{}c@{}}99.63 $\pm$ 0.35\\3\end{tabular} \\
\hline
german & \begin{tabular}{@{}c@{}}63.75 $\pm$ 2.22\\5\end{tabular} & \begin{tabular}{@{}c@{}}\textbf{74.56 $\pm$ 1.89}\\1\end{tabular} & \begin{tabular}{@{}c@{}}74.40 $\pm$ 1.68\\2\end{tabular} & \begin{tabular}{@{}c@{}}71.76 $\pm$ 4.30\\4\end{tabular} & \begin{tabular}{@{}c@{}}57.65 $\pm$ 0.00\\6\end{tabular} & \begin{tabular}{@{}c@{}}74.14 $\pm$ 1.53\\3\end{tabular} \\
\hline
glass & \begin{tabular}{@{}c@{}}68.66 $\pm$ 4.30\\3\end{tabular} & \begin{tabular}{@{}c@{}}63.34 $\pm$ 4.52\\6\end{tabular} & \begin{tabular}{@{}c@{}}69.69 $\pm$ 4.96\\2\end{tabular} & \begin{tabular}{@{}c@{}}67.81 $\pm$ 4.18\\4\end{tabular} & \begin{tabular}{@{}c@{}}66.07 $\pm$ 4.43\\5\end{tabular} & \begin{tabular}{@{}c@{}}\textbf{70.10 $\pm$ 4.33}\\1\end{tabular} \\
\hline
haberman & \begin{tabular}{@{}c@{}}66.23 $\pm$ 2.95\\3\end{tabular} & \begin{tabular}{@{}c@{}}63.79 $\pm$ 2.07\\6\end{tabular} & \begin{tabular}{@{}c@{}}\textbf{67.37 $\pm$ 3.54}\\1\end{tabular} & \begin{tabular}{@{}c@{}}65.99 $\pm$ 2.95\\5\end{tabular} & \begin{tabular}{@{}c@{}}66.42 $\pm$ 3.53\\2\end{tabular} & \begin{tabular}{@{}c@{}}66.17 $\pm$ 3.57\\4\end{tabular} \\
\hline
heart & \begin{tabular}{@{}c@{}}76.46 $\pm$ 3.95\\6\end{tabular} & \begin{tabular}{@{}c@{}}83.38 $\pm$ 3.20\\2\end{tabular} & \begin{tabular}{@{}c@{}}83.16 $\pm$ 3.72\\3\end{tabular} & \begin{tabular}{@{}c@{}}82.18 $\pm$ 3.41\\4\end{tabular} & \begin{tabular}{@{}c@{}}80.17 $\pm$ 4.00\\5\end{tabular} & \begin{tabular}{@{}c@{}}\textbf{83.96 $\pm$ 3.54}\\1\end{tabular} \\
\hline
hepatitis & \begin{tabular}{@{}c@{}}75.76 $\pm$ 0.00\\5\end{tabular} & \begin{tabular}{@{}c@{}}81.48 $\pm$ 5.81\\3\end{tabular} & \begin{tabular}{@{}c@{}}\textbf{83.26 $\pm$ 5.30}\\1\end{tabular} & \begin{tabular}{@{}c@{}}79.01 $\pm$ 5.07\\4\end{tabular} & \begin{tabular}{@{}c@{}}75.76 $\pm$ 0.00\\6\end{tabular} & \begin{tabular}{@{}c@{}}83.12 $\pm$ 6.09\\2\end{tabular} \\
\hline
ionosphere & \begin{tabular}{@{}c@{}}\textbf{94.37 $\pm$ 2.00}\\1\end{tabular} & \begin{tabular}{@{}c@{}}86.77 $\pm$ 2.97\\5\end{tabular} & \begin{tabular}{@{}c@{}}89.39 $\pm$ 3.03\\4\end{tabular} & \begin{tabular}{@{}c@{}}93.95 $\pm$ 1.97\\2\end{tabular} & \begin{tabular}{@{}c@{}}59.96 $\pm$ 4.58\\6\end{tabular} & \begin{tabular}{@{}c@{}}92.27 $\pm$ 2.09\\3\end{tabular} \\
\hline
mines & \begin{tabular}{@{}c@{}}\textbf{60.00 $\pm$ 4.33}\\1\end{tabular} & \begin{tabular}{@{}c@{}}47.28 $\pm$ 2.53\\6\end{tabular} & \begin{tabular}{@{}c@{}}59.40 $\pm$ 4.59\\2\end{tabular} & \begin{tabular}{@{}c@{}}59.25 $\pm$ 4.19\\3\end{tabular} & \begin{tabular}{@{}c@{}}49.80 $\pm$ 3.04\\5\end{tabular} & \begin{tabular}{@{}c@{}}58.42 $\pm$ 4.58\\4\end{tabular} \\
\hline
parkinson & \begin{tabular}{@{}c@{}}\textbf{92.77 $\pm$ 2.53}\\1\end{tabular} & \begin{tabular}{@{}c@{}}84.33 $\pm$ 3.94\\5\end{tabular} & \begin{tabular}{@{}c@{}}88.19 $\pm$ 3.90\\4\end{tabular} & \begin{tabular}{@{}c@{}}92.49 $\pm$ 3.57\\2\end{tabular} & \begin{tabular}{@{}c@{}}70.08 $\pm$ 5.22\\6\end{tabular} & \begin{tabular}{@{}c@{}}92.19 $\pm$ 2.71\\3\end{tabular} \\
\hline
statlog & \begin{tabular}{@{}c@{}}80.79 $\pm$ 2.44\\4\end{tabular} & \begin{tabular}{@{}c@{}}79.46 $\pm$ 2.73\\5\end{tabular} & \begin{tabular}{@{}c@{}}\textbf{83.58 $\pm$ 2.52}\\1\end{tabular} & \begin{tabular}{@{}c@{}}83.06 $\pm$ 2.36\\3\end{tabular} & \begin{tabular}{@{}c@{}}78.56 $\pm$ 2.59\\6\end{tabular} & \begin{tabular}{@{}c@{}}83.18 $\pm$ 2.11\\2\end{tabular} \\
\hline
student & \begin{tabular}{@{}c@{}}12.09 $\pm$ 2.59\\5\end{tabular} & \begin{tabular}{@{}c@{}}\textbf{22.57 $\pm$ 5.47}\\1\end{tabular} & \begin{tabular}{@{}c@{}}21.51 $\pm$ 6.14\\2\end{tabular} & \begin{tabular}{@{}c@{}}15.11 $\pm$ 4.92\\4\end{tabular} & \begin{tabular}{@{}c@{}}10.00 $\pm$ 0.00\\6\end{tabular} & \begin{tabular}{@{}c@{}}20.25 $\pm$ 7.22\\3\end{tabular} \\
\hline
vertebral & \begin{tabular}{@{}c@{}}82.56 $\pm$ 3.94\\6\end{tabular} & \begin{tabular}{@{}c@{}}\textbf{85.90 $\pm$ 3.38}\\1\end{tabular} & \begin{tabular}{@{}c@{}}85.35 $\pm$ 3.08\\2\end{tabular} & \begin{tabular}{@{}c@{}}83.89 $\pm$ 3.24\\5\end{tabular} & \begin{tabular}{@{}c@{}}83.98 $\pm$ 3.82\\4\end{tabular} & \begin{tabular}{@{}c@{}}84.63 $\pm$ 3.08\\3\end{tabular} \\
\hline
wine & \begin{tabular}{@{}c@{}}\textbf{98.68 $\pm$ 1.74}\\1\end{tabular} & \begin{tabular}{@{}c@{}}96.80 $\pm$ 2.45\\6\end{tabular} & \begin{tabular}{@{}c@{}}97.01 $\pm$ 2.51\\5\end{tabular} & \begin{tabular}{@{}c@{}}97.93 $\pm$ 1.77\\3\end{tabular} & \begin{tabular}{@{}c@{}}97.20 $\pm$ 2.17\\4\end{tabular} & \begin{tabular}{@{}c@{}}98.67 $\pm$ 1.59\\2\end{tabular} \\
\hline
\textbf{Avg. Rank} & 
3.65 & 3.90 & 2.75 & 3.45 & 4.80 & \textbf{2.45} \\
\bottomrule
\end{tabular}%
}
\end{table}

\begin{table}[htbp]
\centering
\caption{Statistics on kernel performance according to training time (seconds)}
\label{tab:training_time_results}
\resizebox{\textwidth}{!}{%
\begin{tabular}{lcccccc}
\toprule
\textbf{Kernel} & \textbf{Q-HERMITE} & \textbf{LINEAR} & \textbf{POLY} & \textbf{RBF} & \textbf{HERMITE} & \textbf{GEGEN} \\
\midrule
\textbf{Dataset} & Mean $\mid$ Std & Mean $\mid$ Std & Mean $\mid$ Std & Mean $\mid$ Std & Mean $\mid$ Std & Mean $\mid$ Std \\
\midrule
SPECTF & \begin{tabular}{@{}c@{}}0.0043 $\pm$ 0.0010\\4\end{tabular} & \begin{tabular}{@{}c@{}}0.0016 $\pm$ 0.0012\\3\end{tabular} & \begin{tabular}{@{}c@{}}\textbf{0.0013 $\pm$ 0.0004}\\1\end{tabular} & \begin{tabular}{@{}c@{}}0.0016 $\pm$ 0.0001\\2\end{tabular} & \begin{tabular}{@{}c@{}}0.0059 $\pm$ 0.0010\\5\end{tabular} & \begin{tabular}{@{}c@{}}0.0190 $\pm$ 0.0147\\6\end{tabular} \\
\hline
australian & \begin{tabular}{@{}c@{}}0.0117 $\pm$ 0.0036\\3\end{tabular} & \begin{tabular}{@{}c@{}}0.0126 $\pm$ 0.0060\\4\end{tabular} & \begin{tabular}{@{}c@{}}0.0058 $\pm$ 0.0024\\2\end{tabular} & \begin{tabular}{@{}c@{}}\textbf{0.0051 $\pm$ 0.0008}\\1\end{tabular} & \begin{tabular}{@{}c@{}}0.0200 $\pm$ 0.0012\\5\end{tabular} & \begin{tabular}{@{}c@{}}0.0359 $\pm$ 0.0049\\6\end{tabular} \\
\hline
blood & \begin{tabular}{@{}c@{}}0.0166 $\pm$ 0.0029\\4\end{tabular} & \begin{tabular}{@{}c@{}}\textbf{0.0099 $\pm$ 0.0036}\\1\end{tabular} & \begin{tabular}{@{}c@{}}0.0222 $\pm$ 0.0045\\5\end{tabular} & \begin{tabular}{@{}c@{}}0.0112 $\pm$ 0.0026\\3\end{tabular} & \begin{tabular}{@{}c@{}}0.0100 $\pm$ 0.0014\\2\end{tabular} & \begin{tabular}{@{}c@{}}0.0281 $\pm$ 0.0094\\6\end{tabular} \\
\hline
breast & \begin{tabular}{@{}c@{}}0.0079 $\pm$ 0.0011\\4\end{tabular} & \begin{tabular}{@{}c@{}}\textbf{0.0028 $\pm$ 0.0017}\\1\end{tabular} & \begin{tabular}{@{}c@{}}0.0028 $\pm$ 0.0017\\2\end{tabular} & \begin{tabular}{@{}c@{}}0.0034 $\pm$ 0.0004\\3\end{tabular} & \begin{tabular}{@{}c@{}}0.0105 $\pm$ 0.0025\\5\end{tabular} & \begin{tabular}{@{}c@{}}0.0186 $\pm$ 0.0021\\6\end{tabular} \\
\hline
cervical & \begin{tabular}{@{}c@{}}0.0012 $\pm$ 0.0001\\5\end{tabular} & \begin{tabular}{@{}c@{}}\textbf{0.0007 $\pm$ 0.0000}\\1\end{tabular} & \begin{tabular}{@{}c@{}}0.0007 $\pm$ 0.0000\\2\end{tabular} & \begin{tabular}{@{}c@{}}0.0008 $\pm$ 0.0000\\3\end{tabular} & \begin{tabular}{@{}c@{}}0.0016 $\pm$ 0.0003\\6\end{tabular} & \begin{tabular}{@{}c@{}}0.0011 $\pm$ 0.0001\\4\end{tabular} \\
\hline
ecoli & \begin{tabular}{@{}c@{}}0.0025 $\pm$ 0.0003\\4\end{tabular} & \begin{tabular}{@{}c@{}}\textbf{0.0015 $\pm$ 0.0002}\\1\end{tabular} & \begin{tabular}{@{}c@{}}0.0018 $\pm$ 0.0005\\2\end{tabular} & \begin{tabular}{@{}c@{}}0.0021 $\pm$ 0.0002\\3\end{tabular} & \begin{tabular}{@{}c@{}}0.0030 $\pm$ 0.0002\\5\end{tabular} & \begin{tabular}{@{}c@{}}0.0033 $\pm$ 0.0022\\6\end{tabular} \\
\hline
fertility & \begin{tabular}{@{}c@{}}0.0012 $\pm$ 0.0002\\4\end{tabular} & \begin{tabular}{@{}c@{}}0.0009 $\pm$ 0.0003\\3\end{tabular} & \begin{tabular}{@{}c@{}}\textbf{0.0008 $\pm$ 0.0001}\\1\end{tabular} & \begin{tabular}{@{}c@{}}0.0009 $\pm$ 0.0000\\2\end{tabular} & \begin{tabular}{@{}c@{}}0.0013 $\pm$ 0.0002\\5\end{tabular} & \begin{tabular}{@{}c@{}}0.0014 $\pm$ 0.0003\\6\end{tabular} \\
\hline
fourclass & \begin{tabular}{@{}c@{}}0.0081 $\pm$ 0.0009\\2\end{tabular} & \begin{tabular}{@{}c@{}}0.0189 $\pm$ 0.0072\\5\end{tabular} & \begin{tabular}{@{}c@{}}0.0239 $\pm$ 0.0069\\6\end{tabular} & \begin{tabular}{@{}c@{}}\textbf{0.0068 $\pm$ 0.0003}\\1\end{tabular} & \begin{tabular}{@{}c@{}}0.0104 $\pm$ 0.0003\\3\end{tabular} & \begin{tabular}{@{}c@{}}0.0118 $\pm$ 0.0007\\4\end{tabular} \\
\hline
german & \begin{tabular}{@{}c@{}}0.0398 $\pm$ 0.0045\\4\end{tabular} & \begin{tabular}{@{}c@{}}0.0344 $\pm$ 0.0130\\3\end{tabular} & \begin{tabular}{@{}c@{}}0.0218 $\pm$ 0.0103\\2\end{tabular} & \begin{tabular}{@{}c@{}}\textbf{0.0195 $\pm$ 0.0096}\\1\end{tabular} & \begin{tabular}{@{}c@{}}0.0701 $\pm$ 0.0044\\5\end{tabular} & \begin{tabular}{@{}c@{}}0.1287 $\pm$ 0.0037\\6\end{tabular} \\
\hline
glass & \begin{tabular}{@{}c@{}}0.0024 $\pm$ 0.0007\\6\end{tabular} & \begin{tabular}{@{}c@{}}0.0018 $\pm$ 0.0006\\2\end{tabular} & \begin{tabular}{@{}c@{}}0.0021 $\pm$ 0.0003\\3\end{tabular} & \begin{tabular}{@{}c@{}}\textbf{0.0016 $\pm$ 0.0001}\\1\end{tabular} & \begin{tabular}{@{}c@{}}0.0022 $\pm$ 0.0001\\4\end{tabular} & \begin{tabular}{@{}c@{}}0.0024 $\pm$ 0.0003\\5\end{tabular} \\
\hline
haberman & \begin{tabular}{@{}c@{}}0.0028 $\pm$ 0.0015\\4\end{tabular} & \begin{tabular}{@{}c@{}}\textbf{0.0018 $\pm$ 0.0010}\\1\end{tabular} & \begin{tabular}{@{}c@{}}0.0038 $\pm$ 0.0020\\6\end{tabular} & \begin{tabular}{@{}c@{}}0.0020 $\pm$ 0.0005\\3\end{tabular} & \begin{tabular}{@{}c@{}}0.0019 $\pm$ 0.0004\\2\end{tabular} & \begin{tabular}{@{}c@{}}0.0037 $\pm$ 0.0016\\5\end{tabular} \\
\hline
heart & \begin{tabular}{@{}c@{}}0.0027 $\pm$ 0.0003\\3\end{tabular} & \begin{tabular}{@{}c@{}}0.0037 $\pm$ 0.0022\\6\end{tabular} & \begin{tabular}{@{}c@{}}\textbf{0.0014 $\pm$ 0.0005}\\1\end{tabular} & \begin{tabular}{@{}c@{}}0.0015 $\pm$ 0.0001\\2\end{tabular} & \begin{tabular}{@{}c@{}}0.0032 $\pm$ 0.0001\\5\end{tabular} & \begin{tabular}{@{}c@{}}0.0028 $\pm$ 0.0003\\4\end{tabular} \\
\hline
hepatitis & \begin{tabular}{@{}c@{}}0.0018 $\pm$ 0.0004\\6\end{tabular} & \begin{tabular}{@{}c@{}}\textbf{0.0008 $\pm$ 0.0001}\\1\end{tabular} & \begin{tabular}{@{}c@{}}0.0008 $\pm$ 0.0000\\2\end{tabular} & \begin{tabular}{@{}c@{}}0.0009 $\pm$ 0.0001\\3\end{tabular} & \begin{tabular}{@{}c@{}}0.0016 $\pm$ 0.0003\\5\end{tabular} & \begin{tabular}{@{}c@{}}0.0013 $\pm$ 0.0001\\4\end{tabular} \\
\hline
ionosphere & \begin{tabular}{@{}c@{}}0.0052 $\pm$ 0.0003\\4\end{tabular} & \begin{tabular}{@{}c@{}}0.0027 $\pm$ 0.0016\\3\end{tabular} & \begin{tabular}{@{}c@{}}0.0021 $\pm$ 0.0005\\2\end{tabular} & \begin{tabular}{@{}c@{}}\textbf{0.0020 $\pm$ 0.0001}\\1\end{tabular} & \begin{tabular}{@{}c@{}}0.0078 $\pm$ 0.0006\\5\end{tabular} & \begin{tabular}{@{}c@{}}0.0108 $\pm$ 0.0008\\6\end{tabular} \\
\hline
mines & \begin{tabular}{@{}c@{}}0.0071 $\pm$ 0.0019\\5\end{tabular} & \begin{tabular}{@{}c@{}}0.0035 $\pm$ 0.0018\\2\end{tabular} & \begin{tabular}{@{}c@{}}0.0120 $\pm$ 0.0026\\6\end{tabular} & \begin{tabular}{@{}c@{}}0.0049 $\pm$ 0.0006\\3\end{tabular} & \begin{tabular}{@{}c@{}}\textbf{0.0031 $\pm$ 0.0004}\\1\end{tabular} & \begin{tabular}{@{}c@{}}0.0055 $\pm$ 0.0014\\4\end{tabular} \\
\hline
parkinson & \begin{tabular}{@{}c@{}}0.0026 $\pm$ 0.0004\\4\end{tabular} & \begin{tabular}{@{}c@{}}0.0021 $\pm$ 0.0008\\3\end{tabular} & \begin{tabular}{@{}c@{}}0.0016 $\pm$ 0.0003\\2\end{tabular} & \begin{tabular}{@{}c@{}}\textbf{0.0012 $\pm$ 0.0000}\\1\end{tabular} & \begin{tabular}{@{}c@{}}0.0032 $\pm$ 0.0002\\6\end{tabular} & \begin{tabular}{@{}c@{}}0.0030 $\pm$ 0.0003\\5\end{tabular} \\
\hline
statlog & \begin{tabular}{@{}c@{}}0.0276 $\pm$ 0.0032\\4\end{tabular} & \begin{tabular}{@{}c@{}}0.0180 $\pm$ 0.0054\\2\end{tabular} & \begin{tabular}{@{}c@{}}0.0200 $\pm$ 0.0035\\3\end{tabular} & \begin{tabular}{@{}c@{}}\textbf{0.0124 $\pm$ 0.0009}\\1\end{tabular} & \begin{tabular}{@{}c@{}}0.0383 $\pm$ 0.0009\\5\end{tabular} & \begin{tabular}{@{}c@{}}0.0751 $\pm$ 0.0038\\6\end{tabular} \\
\hline
student & \begin{tabular}{@{}c@{}}0.0025 $\pm$ 0.0005\\4\end{tabular} & \begin{tabular}{@{}c@{}}0.0018 $\pm$ 0.0004\\3\end{tabular} & \begin{tabular}{@{}c@{}}\textbf{0.0015 $\pm$ 0.0002}\\1\end{tabular} & \begin{tabular}{@{}c@{}}0.0016 $\pm$ 0.0001\\2\end{tabular} & \begin{tabular}{@{}c@{}}0.0029 $\pm$ 0.0005\\5\end{tabular} & \begin{tabular}{@{}c@{}}0.0033 $\pm$ 0.0012\\6\end{tabular} \\
\hline
vertebral & \begin{tabular}{@{}c@{}}0.0024 $\pm$ 0.0007\\4\end{tabular} & \begin{tabular}{@{}c@{}}0.0017 $\pm$ 0.0004\\2\end{tabular} & \begin{tabular}{@{}c@{}}\textbf{0.0017 $\pm$ 0.0004}\\1\end{tabular} & \begin{tabular}{@{}c@{}}0.0018 $\pm$ 0.0001\\3\end{tabular} & \begin{tabular}{@{}c@{}}0.0025 $\pm$ 0.0002\\5\end{tabular} & \begin{tabular}{@{}c@{}}0.0030 $\pm$ 0.0019\\6\end{tabular} \\
\hline
wine & \begin{tabular}{@{}c@{}}0.0019 $\pm$ 0.0003\\5\end{tabular} & \begin{tabular}{@{}c@{}}\textbf{0.0009 $\pm$ 0.0000}\\1\end{tabular} & \begin{tabular}{@{}c@{}}0.0010 $\pm$ 0.0001\\2\end{tabular} & \begin{tabular}{@{}c@{}}0.0011 $\pm$ 0.0001\\3\end{tabular} & \begin{tabular}{@{}c@{}}0.0019 $\pm$ 0.0002\\4\end{tabular} & \begin{tabular}{@{}c@{}}0.0020 $\pm$ 0.0002\\6\end{tabular} \\
\hline
\textbf{Avg. Rank} & 
4.15 & 2.40 & 2.60 & \textbf{2.10} & 4.40 & 5.35 \\
\bottomrule
\end{tabular}%
}
\end{table}

\begin{table}[htbp]
\centering
\caption{Statistics on kernel performance according to support vector proportion}
\label{tab:support_vector_results}
\resizebox{\textwidth}{!}{%
\begin{tabular}{lcccccc}
\toprule
\textbf{Kernel} & \textbf{Q-HERMITE} & \textbf{LINEAR} & \textbf{POLY} & \textbf{RBF} & \textbf{HERMITE} & \textbf{GEGEN} \\
\midrule
\textbf{Dataset} & Mean $\mid$ Std & Mean $\mid$ Std & Mean $\mid$ Std & Mean $\mid$ Std & Mean $\mid$ Std & Mean $\mid$ Std \\
\midrule
SPECTF & \begin{tabular}{@{}c@{}}65.5607 $\pm$ 18.0682\\4\end{tabular} & \begin{tabular}{@{}c@{}}\textbf{40.4455 $\pm$ 5.3811}\\1\end{tabular} & \begin{tabular}{@{}c@{}}43.3333 $\pm$ 2.8134\\2\end{tabular} & \begin{tabular}{@{}c@{}}80.0000 $\pm$ 22.2516\\6\end{tabular} & \begin{tabular}{@{}c@{}}50.4762 $\pm$ 6.7283\\3\end{tabular} & \begin{tabular}{@{}c@{}}71.7665 $\pm$ 18.9767\\5\end{tabular} \\
\hline
australian & \begin{tabular}{@{}c@{}}55.9243 $\pm$ 5.0646\\6\end{tabular} & \begin{tabular}{@{}c@{}}38.0657 $\pm$ 4.5306\\4\end{tabular} & \begin{tabular}{@{}c@{}}35.4451 $\pm$ 11.7165\\2\end{tabular} & \begin{tabular}{@{}c@{}}\textbf{35.1849 $\pm$ 4.8312}\\1\end{tabular} & \begin{tabular}{@{}c@{}}50.9021 $\pm$ 2.3204\\5\end{tabular} & \begin{tabular}{@{}c@{}}36.9417 $\pm$ 12.0825\\3\end{tabular} \\
\hline
blood & \begin{tabular}{@{}c@{}}51.4395 $\pm$ 2.8761\\5\end{tabular} & \begin{tabular}{@{}c@{}}48.9047 $\pm$ 0.5710\\2\end{tabular} & \begin{tabular}{@{}c@{}}53.4390 $\pm$ 3.5207\\6\end{tabular} & \begin{tabular}{@{}c@{}}49.2762 $\pm$ 1.5011\\3\end{tabular} & \begin{tabular}{@{}c@{}}\textbf{48.5824 $\pm$ 1.0111}\\1\end{tabular} & \begin{tabular}{@{}c@{}}50.1830 $\pm$ 1.9345\\4\end{tabular} \\
\hline
breast & \begin{tabular}{@{}c@{}}29.7609 $\pm$ 5.6359\\6\end{tabular} & \begin{tabular}{@{}c@{}}\textbf{9.1632 $\pm$ 2.2686}\\1\end{tabular} & \begin{tabular}{@{}c@{}}10.6934 $\pm$ 2.9296\\2\end{tabular} & \begin{tabular}{@{}c@{}}22.3849 $\pm$ 14.7848\\5\end{tabular} & \begin{tabular}{@{}c@{}}19.3664 $\pm$ 3.3264\\4\end{tabular} & \begin{tabular}{@{}c@{}}16.5272 $\pm$ 7.2075\\3\end{tabular} \\
\hline
cervical & \begin{tabular}{@{}c@{}}79.7714 $\pm$ 8.4010\\6\end{tabular} & \begin{tabular}{@{}c@{}}\textbf{30.4000 $\pm$ 6.7579}\\1\end{tabular} & \begin{tabular}{@{}c@{}}38.8571 $\pm$ 10.1112\\2\end{tabular} & \begin{tabular}{@{}c@{}}51.1429 $\pm$ 14.5664\\3\end{tabular} & \begin{tabular}{@{}c@{}}78.4571 $\pm$ 5.4319\\5\end{tabular} & \begin{tabular}{@{}c@{}}53.7143 $\pm$ 16.1007\\4\end{tabular} \\
\hline
ecoli & \begin{tabular}{@{}c@{}}47.5988 $\pm$ 4.7908\\4\end{tabular} & \begin{tabular}{@{}c@{}}44.4134 $\pm$ 6.9819\\2\end{tabular} & \begin{tabular}{@{}c@{}}\textbf{42.4195 $\pm$ 5.2164}\\1\end{tabular} & \begin{tabular}{@{}c@{}}47.8541 $\pm$ 6.2411\\5\end{tabular} & \begin{tabular}{@{}c@{}}50.0790 $\pm$ 4.4511\\6\end{tabular} & \begin{tabular}{@{}c@{}}47.5745 $\pm$ 5.3680\\3\end{tabular} \\
\hline
fertility & \begin{tabular}{@{}c@{}}46.4490 $\pm$ 27.1189\\4\end{tabular} & \begin{tabular}{@{}c@{}}42.1633 $\pm$ 16.3146\\2\end{tabular} & \begin{tabular}{@{}c@{}}\textbf{39.8367 $\pm$ 9.9314}\\1\end{tabular} & \begin{tabular}{@{}c@{}}94.3673 $\pm$ 11.3925\\6\end{tabular} & \begin{tabular}{@{}c@{}}42.8980 $\pm$ 5.4370\\3\end{tabular} & \begin{tabular}{@{}c@{}}65.2245 $\pm$ 22.0494\\5\end{tabular} \\
\hline
fourclass & \begin{tabular}{@{}c@{}}8.8652 $\pm$ 1.7924\\3\end{tabular} & \begin{tabular}{@{}c@{}}52.2909 $\pm$ 1.3720\\6\end{tabular} & \begin{tabular}{@{}c@{}}14.2241 $\pm$ 0.7870\\4\end{tabular} & \begin{tabular}{@{}c@{}}6.5198 $\pm$ 2.1181\\2\end{tabular} & \begin{tabular}{@{}c@{}}40.2180 $\pm$ 1.2349\\5\end{tabular} & \begin{tabular}{@{}c@{}}\textbf{5.6053 $\pm$ 1.7374}\\1\end{tabular} \\
\hline
german & \begin{tabular}{@{}c@{}}95.9265 $\pm$ 2.8937\\6\end{tabular} & \begin{tabular}{@{}c@{}}59.9143 $\pm$ 5.1912\\3\end{tabular} & \begin{tabular}{@{}c@{}}\textbf{53.8122 $\pm$ 2.1492}\\1\end{tabular} & \begin{tabular}{@{}c@{}}61.2980 $\pm$ 14.7277\\5\end{tabular} & \begin{tabular}{@{}c@{}}60.3714 $\pm$ 1.1056\\4\end{tabular} & \begin{tabular}{@{}c@{}}55.6245 $\pm$ 2.0993\\2\end{tabular} \\
\hline
glass & \begin{tabular}{@{}c@{}}69.9521 $\pm$ 5.8334\\3\end{tabular} & \begin{tabular}{@{}c@{}}75.3595 $\pm$ 4.7985\\5\end{tabular} & \begin{tabular}{@{}c@{}}\textbf{68.2454 $\pm$ 4.3132}\\1\end{tabular} & \begin{tabular}{@{}c@{}}75.3020 $\pm$ 3.8896\\4\end{tabular} & \begin{tabular}{@{}c@{}}80.5753 $\pm$ 3.8234\\6\end{tabular} & \begin{tabular}{@{}c@{}}69.6836 $\pm$ 6.4043\\2\end{tabular} \\
\hline
haberman & \begin{tabular}{@{}c@{}}\textbf{52.5234 $\pm$ 2.5986}\\1\end{tabular} & \begin{tabular}{@{}c@{}}55.2069 $\pm$ 2.5254\\6\end{tabular} & \begin{tabular}{@{}c@{}}53.6849 $\pm$ 2.7058\\3\end{tabular} & \begin{tabular}{@{}c@{}}55.1535 $\pm$ 3.4977\\5\end{tabular} & \begin{tabular}{@{}c@{}}53.5648 $\pm$ 2.4116\\2\end{tabular} & \begin{tabular}{@{}c@{}}54.3124 $\pm$ 3.0120\\4\end{tabular} \\
\hline
heart & \begin{tabular}{@{}c@{}}81.1338 $\pm$ 4.5285\\6\end{tabular} & \begin{tabular}{@{}c@{}}\textbf{40.7407 $\pm$ 6.6819}\\1\end{tabular} & \begin{tabular}{@{}c@{}}44.1119 $\pm$ 10.3860\\2\end{tabular} & \begin{tabular}{@{}c@{}}46.5306 $\pm$ 9.9243\\4\end{tabular} & \begin{tabular}{@{}c@{}}64.9887 $\pm$ 2.0840\\5\end{tabular} & \begin{tabular}{@{}c@{}}44.5805 $\pm$ 7.8523\\3\end{tabular} \\
\hline
hepatitis & \begin{tabular}{@{}c@{}}65.3571 $\pm$ 32.3440\\5\end{tabular} & \begin{tabular}{@{}c@{}}\textbf{32.5510 $\pm$ 4.7742}\\1\end{tabular} & \begin{tabular}{@{}c@{}}45.7653 $\pm$ 9.4596\\3\end{tabular} & \begin{tabular}{@{}c@{}}78.8265 $\pm$ 21.1812\\6\end{tabular} & \begin{tabular}{@{}c@{}}38.8265 $\pm$ 10.7002\\2\end{tabular} & \begin{tabular}{@{}c@{}}58.7755 $\pm$ 19.3148\\4\end{tabular} \\
\hline
ionosphere & \begin{tabular}{@{}c@{}}65.3644 $\pm$ 2.3256\\6\end{tabular} & \begin{tabular}{@{}c@{}}27.6851 $\pm$ 8.1530\\2\end{tabular} & \begin{tabular}{@{}c@{}}\textbf{26.4840 $\pm$ 3.8485}\\1\end{tabular} & \begin{tabular}{@{}c@{}}50.0175 $\pm$ 9.6225\\4\end{tabular} & \begin{tabular}{@{}c@{}}65.0496 $\pm$ 2.6036\\5\end{tabular} & \begin{tabular}{@{}c@{}}44.4315 $\pm$ 12.7619\\3\end{tabular} \\
\hline
mines & \begin{tabular}{@{}c@{}}72.5061 $\pm$ 2.6155\\2\end{tabular} & \begin{tabular}{@{}c@{}}79.1768 $\pm$ 2.1081\\4\end{tabular} & \begin{tabular}{@{}c@{}}\textbf{71.6586 $\pm$ 2.6155}\\1\end{tabular} & \begin{tabular}{@{}c@{}}79.4673 $\pm$ 3.4637\\5\end{tabular} & \begin{tabular}{@{}c@{}}80.5932 $\pm$ 1.6668\\6\end{tabular} & \begin{tabular}{@{}c@{}}76.3196 $\pm$ 3.8058\\3\end{tabular} \\
\hline
parkinson & \begin{tabular}{@{}c@{}}47.5840 $\pm$ 6.4571\\4\end{tabular} & \begin{tabular}{@{}c@{}}31.6176 $\pm$ 4.0807\\2\end{tabular} & \begin{tabular}{@{}c@{}}\textbf{28.8235 $\pm$ 2.8764}\\1\end{tabular} & \begin{tabular}{@{}c@{}}60.2311 $\pm$ 15.1858\\6\end{tabular} & \begin{tabular}{@{}c@{}}54.1597 $\pm$ 1.7660\\5\end{tabular} & \begin{tabular}{@{}c@{}}35.9244 $\pm$ 4.4575\\3\end{tabular} \\
\hline
statlog & \begin{tabular}{@{}c@{}}57.8777 $\pm$ 6.2806\\6\end{tabular} & \begin{tabular}{@{}c@{}}47.4643 $\pm$ 3.5915\\2\end{tabular} & \begin{tabular}{@{}c@{}}\textbf{44.7136 $\pm$ 2.8573}\\1\end{tabular} & \begin{tabular}{@{}c@{}}49.6737 $\pm$ 3.0474\\4\end{tabular} & \begin{tabular}{@{}c@{}}56.6981 $\pm$ 2.0120\\5\end{tabular} & \begin{tabular}{@{}c@{}}47.8608 $\pm$ 2.2305\\3\end{tabular} \\
\hline
student & \begin{tabular}{@{}c@{}}99.6888 $\pm$ 1.1780\\5\end{tabular} & \begin{tabular}{@{}c@{}}\textbf{98.6139 $\pm$ 1.1152}\\1\end{tabular} & \begin{tabular}{@{}c@{}}99.4625 $\pm$ 0.9271\\3\end{tabular} & \begin{tabular}{@{}c@{}}100.0000 $\pm$ 0.0000\\6\end{tabular} & \begin{tabular}{@{}c@{}}99.3494 $\pm$ 0.7463\\2\end{tabular} & \begin{tabular}{@{}c@{}}99.5474 $\pm$ 1.0137\\4\end{tabular} \\
\hline
vertebral & \begin{tabular}{@{}c@{}}42.5938 $\pm$ 6.3380\\3\end{tabular} & \begin{tabular}{@{}c@{}}\textbf{37.4194 $\pm$ 4.3227}\\1\end{tabular} & \begin{tabular}{@{}c@{}}40.4345 $\pm$ 4.5398\\2\end{tabular} & \begin{tabular}{@{}c@{}}43.9105 $\pm$ 3.9297\\5\end{tabular} & \begin{tabular}{@{}c@{}}44.7268 $\pm$ 4.8632\\6\end{tabular} & \begin{tabular}{@{}c@{}}43.7130 $\pm$ 4.3119\\4\end{tabular} \\
\hline
wine & \begin{tabular}{@{}c@{}}50.3226 $\pm$ 7.1385\\5\end{tabular} & \begin{tabular}{@{}c@{}}33.8249 $\pm$ 17.1291\\2\end{tabular} & \begin{tabular}{@{}c@{}}38.9401 $\pm$ 20.2744\\3\end{tabular} & \begin{tabular}{@{}c@{}}41.7512 $\pm$ 15.4921\\4\end{tabular} & \begin{tabular}{@{}c@{}}\textbf{31.9816 $\pm$ 7.5182}\\1\end{tabular} & \begin{tabular}{@{}c@{}}54.2627 $\pm$ 17.3325\\6\end{tabular} \\
\hline
\textbf{Avg. Rank} & 
4.50 & 2.45 & \textbf{2.10} & 4.45 & 4.05 & 3.45 \\
\bottomrule
\end{tabular}%
}
\end{table}

\subsection{Study limitations}


While the metrics employed in this study offer useful insights into kernel performance, they may only partially reflect the adaptability of kernels across varied scenarios and provide limited indications of the specific conditions under which each kernel excels. These evaluations effectively characterize kernel behavior on the selected datasets, though the restricted collection may introduce some bias in global interpretations. A promising direction for future work could involve incorporating additional metrics, such as those based on spectral analysis of the kernel matrix, to enable a more comprehensive understanding of kernels \cite{ li2017efficient, liu2017infinite}. As observed in our results, the proportion of support vectors, at least for the dataset collection considered here, does not support Padierna et al.'s hypotheses \cite{padierna2018novel} regarding substantial improvements over classical kernels. In particular, their claim that orthogonal polynomial kernels reduce redundancy in the feature-space representation, thereby yielding fewer support vectors, is not reflected in our empirical findings. Nevertheless, because these outcomes are strongly dataset-dependent and no formal analytical characterization of this hypothesis has been established, it remains difficult to draw definitive conclusions about the expected behaviour of PSV across broader settings, leaving room for further in-depth studies on this topic.

Rank-based comparisons may also introduce interpretive biases, as the absolute differences between metrics (e.g., accuracy or F1-score) are often small, even when Friedman tests consider them statistically significant \cite{liu2022t}. This limitation is well-documented in the statistical literature \cite{flach2019performance, demvsar2006statistical, garcia2010advanced}, where the dissociation between statistical significance and practical significance frequently occurs in machine learning comparisons. This phenomenon is particularly pronounced in high-dimensional comparison scenarios where multiple classifiers are evaluated across numerous datasets, as small but consistent performance differences can yield significant rank differences while offering negligible practical utility \cite{benavoli2017time, bouthillier2021accounting}. The issue is further compounded by the sensitivity of non-parametric tests to sample size, where large-scale experimental designs (e.g., 35 trials across multiple datasets) can detect minuscule differences that lack real-world relevance \cite{wasserstein2016ASA}.

Moreover, our analysis does not include an explicit optimization of semantic hyperparameters appearing in the kernel definitions. Both the choice of $\varepsilon = 0.1$ as a lower clipping bound for the Gegenbauer and q-Hermite weight functions, as well as the truncation of the Pochhammer operator at $\vartheta = 10$, remain design decisions associated with a data science process. These parameters clearly affect the effective resolution and numerical behaviour of the kernels, so a specific sensitivity analysis is recommended in future work.

A priority for future research is to develop new metrics that better characterize the algebraic structure of these kernels and their capacity to adapt to different feature spaces. The experimental design could also be expanded to support these metrics; for example, explicitly studying the influence of the number of classes, features, or samples on the kernel behaviour \cite{uddin2024dataset, althnian2021impact}. Such analyses would allow us to determine the application regimes in which each kernel family is most effective, reducing the dependence on a small and heterogeneous set of datasets whose idiosyncrasies can distort broader conclusions \cite{lones2024avoiding, guido2024overview}.

\section{Conclusions}
\label{secconc}

In this work, we have introduced a novel family of kernel functions for Support Vector Machines based on $q$-orthogonal polynomials, specifically the discrete $q$-Hermite I polynomials. This represents a significant extension beyond the established Gegenbauer paradigm in orthogonal polynomial kernels for machine learning.

The theoretical foundation of our approach demonstrates several advantageous properties. Unlike traditional orthogonal polynomial kernels that require explicit scaling functions to constrain polynomial amplitudes outside the natural domain $[-1,1]$, the q-Hermite kernel exhibits inherent boundedness properties. We have rigorously proven that for the parameter choice $a = -1$ and $0 < q < 1$, the discrete $q$-Hermite I polynomials remain uniformly bounded over the compact domain $[-1,1]$. This inherent boundedness, combined with the carefully constructed weight function that remains strictly within the interval $(0,1]$, effectively prevents both annihilation and explosion effects that commonly plague orthogonal polynomial kernels.

Furthermore, we have established that the q-Hermite kernel satisfies Mercer's conditions, ensuring its validity for use in SVM optimization. The kernel's symmetry and positive semidefiniteness guarantee its correspondence to an implicit feature space where SVMs can operate effectively. The multiplicative extension to higher dimensions preserves these properties, making the kernel applicable to real-world classification problems.

From a computational perspective, the elimination of the scaling function requirement presents advantages in both implementation simplicity and potential efficiency gains. The analytical properties of the discrete $q$-Hermite I polynomials provide natural regularization against numerical instability. These computational advantages are particularly salient in high-dimensional or noisy data regimes, where the q-deformation parameter provides tunable adaptability without the numerical instabilities common in non-deformed polynomials. 

A promising direction for future research is to design kernels for QSVMs using discrete $q$-Hermite I polynomials. General quantum kernels of QSVMs can offer decisive advantages over their classical counterparts by efficiently exploring high-dimensional Hilbert spaces, which provides great expressiveness to capture complex nonlinear relationships without the computational costs of classical systems. In this sense, q-orthogonal polynomials could result very adequate for the kernels of QSVMs as they emerged in the context of quantum mechanics and have unique properties relevant to the description of complex quantum systems and quantum groups. Besides, the q-deformation parameter allows for a gradual transition between classical and quantum implementations, making them well-suited in hybrid quantum-classical settings.

Overall, this research not only enriches the orthogonal polynomial kernel landscape but also underscores the value of q-analogues in providing mathematically elegant, stable, and efficient solutions for pattern recognition tasks. The open-source implementation further facilitates adoption and extension by the community.





\section*{Funding sources}
\label{fund}
This research was supported by the Universidad de Alcalá through Ayudas para la realización de proyectos de investigación UAH, 2024.  The authors gratefully acknowledge the financial support provided under project QuSuVeMa, PIUAH24/IA-021.

\section*{Data Availability}
\label{data}
The authors provide full code and experimental resources at
\url{https://github.com/Kokechacho/SVMs-QSVMs}.

\bibliographystyle{elsarticle-num}
\bibliography{SVMs}

\appendix

\begin{table}[htbp]
\centering
\caption{Average ranks from Friedman tests (the lower values the better). Kernels are sorted by average rank for each metric.}
\label{tab:friedman_ranks}
\small
\begin{tabular}{lcccccc}
\hline
\textbf{Metric} & \multicolumn{6}{c}{\textbf{Average Rank}} \\
& \textbf{Linear} & \textbf{Hermite} & \textbf{RBF} & \textbf{Q-Hermite} & \textbf{Polynomial} & \textbf{Gegenbauer} \\
\hline
Accuracy & 2.700 & 2.775 & 3.400 & 3.525 & 4.000 & 4.600 \\
Training Time & 2.400 & 4.400 & \textbf{2.100} & 4.150 & 2.600 & 5.350 \\
F1-Score & 3.100 & \textbf{2.225} & 3.550 & 3.325 & 4.250 & 4.550 \\
SPV \% & 2.450 & 4.050 & 4.450 & 4.500 & \textbf{2.100} & 3.450 \\
\hline
\end{tabular}
\vspace{0.1cm}
\begin{minipage}{\textwidth}
\footnotesize
\textbf{Note:} Bold values indicate the best performing kernel for each metric.
\end{minipage}
\end{table}

\begin{table}[htbp]
\centering
\caption{Pairwise comparisons of Q-Hermite kernel performance against other kernel groups. Positive mean differences favor Q-Hermite.}
\label{tab:pairwise_comparisons}
\small
\begin{tabular}{lcccc}
\hline
\textbf{Metric} & \textbf{Comparison} & \textbf{Wilcoxon W} & \textbf{p-value} & \textbf{Mean Difference} \\
\hline
\multirow{3}{*}{Accuracy} & vs All Kernels & 100.00 & 0.8695 & 0.0031 \\
 & vs Classical Kernels & 100.00 & 0.8695 & -0.0013 \\
 & vs Orthogonal Kernels & 103.50 & 0.9553 & 0.0096 \\
\hline
\multirow{3}{*}{Training Time} & vs All Kernels & 90.00 & 0.5958 & -0.0013 \\
 & vs Classical Kernels & \textbf{18.00} & \textbf{0.0005} & \textbf{0.0021} \\
 & vs Orthogonal Kernels & \textbf{27.00} & \textbf{0.0023} & \textbf{-0.0064} \\
\hline
\multirow{3}{*}{F1-Score} & vs All Kernels & 104.00 & 0.9854 & -0.0006 \\
 & vs Classical Kernels & 93.00 & 0.6742 & -0.0106 \\
 & vs Orthogonal Kernels & 89.00 & 0.5706 & 0.0145 \\
\hline
\multirow{3}{*}{SPV \%} & vs All Kernels & \textbf{45.00} & \textbf{0.0240} & \textbf{8.3787} \\
 & vs Classical Kernels & \textbf{43.00} & \textbf{0.0192} & \textbf{10.1650} \\
 & vs Orthogonal Kernels & 56.00 & 0.0696 & 5.6991 \\
\hline
\end{tabular}
\vspace{0.1cm}
\begin{minipage}{\textwidth}
\footnotesize
\textbf{Note:} Significant comparisons ($p < 0.05$) are shown in bold. Classical kernels: Linear, Polynomial, RBF. Orthogonal kernels: Hermite, Gegenbauer. Positive differences indicate Q-Hermite performs better.
\end{minipage}
\end{table}

\end{document}